\documentclass{article}


	\PassOptionsToPackage{round}{natbib}



\usepackage[final]{neurips_2020}





\usepackage[utf8]{inputenc} 
\usepackage[T1]{fontenc}    
\usepackage{hyperref}       
\usepackage{url}            
\usepackage{booktabs}       
\usepackage{amsfonts}       
\usepackage{nicefrac}       
\usepackage{microtype}      

\usepackage{graphicx}
\usepackage[dvipsnames]{xcolor}
\usepackage{rotating}
\usepackage{tabularx}
\usepackage{pdflscape}
\usepackage{amsmath,amsthm,amssymb}
\usepackage{algorithm,algpseudocode}
\usepackage{bbm,dsfont}
\usepackage{caption,subcaption}
\usepackage{cancel}
\usepackage{enumerate, cases}
\usepackage{thmtools,thm-restate}
\usepackage[none]{hyphenat}
\usepackage{natbib}
\usepackage{wrapfig}
\usepackage{mathtools}
\usepackage{array}
\usepackage{tabularx,multicol,multirow}
\newcolumntype{M}[1]{>{\centering\arraybackslash}m{#1}}
\usepackage{pifont}

\allowdisplaybreaks

\hypersetup{
	colorlinks	= true,		
	urlcolor     = blue,	 
	linkcolor	 = purple, 	 
	citecolor    = violet    	 
}
\usepackage[capitalize]{cleveref}

\setlength{\emergencystretch}{3em}
\allowdisplaybreaks

\algdef{SE}[DOWHILE]{Do}{doWhile}{\algorithmicdo}[1]{\algorithmicwhile\ #1}





\newcommand{\USS}{\cP_{\text{USS}}}
\newcommand{\PCWD}{\cP_{\text{CWD}}}

\newcommand{\WD}{\mathrm{WD}}
\newcommand{\CSD}{\mathrm{CSD}}
\newcommand{\CWD}{\mathrm{CWD}}

\newcommand{\TCSD}{\Theta_{\CSD}}
\newcommand{\TCWD}{\Theta_{\CWD}}

\newcommand{\bc}{\boldsymbol{c}}
\newcommand{\bq}{\boldsymbol{Q}}
\newcommand{\ist}{{i^\star_t}}

\newcommand{\Xt}{X_t}
\newcommand{\xt}{x_t}
\newcommand{\Yt}{Y_t}

\newcommand{\Yti}{Y_t^i}
\newcommand{\Ytj}{Y_t^j}
\newcommand{\Yts}{Y_t^\ist}

\newcommand{\F}{\cF}

\newcommand{\pijt}{p_{ij}^{(t)}}

\newcommand{\pist}{p_{i^\star_t I_t}^{(t)}}
\newcommand{\pijst}{p_{i^\star_t j}^{(t)}}

\newcommand{\pjist}{p_{ji^\star_t}^{(t)}}

\newcommand{\tpijt}{\tilde{p}_{ij}^{(t)}}
\newcommand{\tpjit}{\tilde{p}_{ji}^{(t)}}
\newcommand{\tplit}{\tilde{p}_{li}^{(t)}}

\newcommand{\tpiht}{\tilde{p}_{ih}^{(t)}}
\newcommand{\pihts}{p_{i^\star_t h}^{(t)}}

\newcommand{\Bt}{\mathcal{B}_t}

\newcommand{\Ts}{\theta_{ij}^\star}
\newcommand{\Te}{\hat\theta_{ij}^t}
\newcommand{\Tsl}{\theta_{li}^\star}
\newcommand{\Tel}{\hat\theta_{li}^t}

\newcommand{\Tsh}{\theta_{ih}^\star}
\newcommand{\Teh}{\hat\theta_{ih}^t}

\newcommand{\Vt}{V_{ij}^t}
\newcommand{\oVt}{\overline{V}_{ij}^t}
\newcommand{\oVijt}{\overline{V}_{ij}^t}

\newcommand{\Vinv}{(V_{ij}^t)^{-1}}
\newcommand{\Vtinv}{(V_{ij}^t)^{-1}}
\newcommand{\oVtinv}{(\overline{V}_{ij}^t)^{-1}}
\newcommand{\Vtl}{V_{li}^t}

\newcommand{\Vinvl}{(V_{li}^t)^{-1}}

\newcommand{\Vth}{V_{ih}^t}
\newcommand{\Vths}{V_{i^\star_t h}^t}

\newcommand{\Vinvh}{(V_{ih}^t)^{-1}}

\newcommand\numberthis{\addtocounter{equation}{1}\tag{\theequation}}


\newcommand{\Regret}{\mathfrak{R}}


\newcommand{\EE}[1]{\bE\left[#1\right]}

\newcommand{\Prob}[1]{\bP\left\{#1\right\}}

\newcommand{\R}{\bR}

\newcommand{\one}[1]{\mathds{1}_{\left\{#1\right\}}}

\renewcommand{\phi}{\varphi}
\renewcommand{\epsilon}{\varepsilon}

\newcommand{\norm}[1]{\left\|#1\right\|}



\newcommand{\al}[1]{ \begin{align} #1  \end{align}}
\newcommand{\eq}[1]{ \begin{equation} #1  \end{equation}}
\newcommand{\als}[1]{ \begin{align*} #1  \end{align*}}
\newcommand{\eqs}[1]{ \begin{equation*} #1  \end{equation*}}


\newcommand{\el}{\end{flushleft}}
\newcommand{\bl}{\begin{flushleft}}

\newcommand{\argmin}{\arg\!\min}


\newcommand{\bE}{\mathbb{E}}

\newcommand{\bP}{\mathbb{P}}

\newcommand{\bR}{\mathbb{R}}

\newcommand{\cF}{\mathcal{F}}

\newcommand{\cP}{\mathcal{P}}

\newcommand{\cX}{\mathcal{X}}

\theoremstyle{plain}

\newtheorem{lem}{Lemma}

\newtheorem{rem}{Remark}
\newtheorem{defi}{Definition}
\newtheorem{assu}{Assumption}

\setlength{\belowdisplayskip}{0pt}
\setlength{\abovedisplayskip}{0pt}
\setlength{\belowdisplayshortskip}{0pt}
\setlength{\abovedisplayshortskip}{0pt}

\newif\ifsup
\supfalse\suptrue		

\title{
	Online Algorithm for Unsupervised Sequential Selection with Contextual Information
}

\author{
	Arun Verma \\
	Department of IEOR\\
	IIT Bombay, India\\
	\texttt{v.arun@iitb.ac.in} \\
	\And
	Manjesh K. Hanawal \\
	Department of IEOR\\
	IIT Bombay, India\\
	\texttt{mhanawal@iitb.ac.in} \\
	\AND
	Csaba Szepesv\'ari \\
	DeepMind/University of Alberta\\
	Alberta, Canada\\
	\texttt{szepi@google.com} \\
	\And
	Venkatesh Saligrama  \\
	Departmetn of ECE\\
	Boston University, USA\\
	\texttt{srv@bu.edu} \\
}

\begin{document}
	\maketitle
	
	\begin{abstract}
        In this paper, we study {\em Contextual \underline{U}nsupervised \underline{S}equential \underline{S}election} (USS), a new variant of the stochastic contextual bandits problem where the loss of an arm cannot be inferred from the observed feedback. In our setup, arms are associated with fixed costs and are ordered, forming a cascade. In each round, a context is presented, and the learner selects the arms sequentially till some depth.  The total cost incurred by stopping at an arm is the sum of fixed costs of arms selected and the stochastic loss associated with the arm.  The learner's goal is to learn a decision rule that maps contexts to arms with the goal of minimizing the total expected loss. The problem is challenging as we are faced with an unsupervised setting as the total loss cannot be estimated. Clearly, learning is feasible only if the optimal arm can be inferred (explicitly or implicitly) from the problem structure. We observe that learning is still possible when the problem instance satisfies the so-called `Contextual Weak Dominance' $(\CWD)$ property. Under $\CWD$, we propose an algorithm for the contextual USS problem and demonstrate that it has sub-linear regret. Experiments on synthetic and real datasets validate our algorithm.
	\end{abstract}

	\section{Introduction}
	\label{sec:introduction}

Industrial systems, such as those found in medical, airport security, and manufacturing, utilize a suite of tests or classifiers for monitoring patients, people, and products. Tests have costs with the more intrusive and informative ones resulting in higher monetary costs and higher latency. For this reason, they are often organized as a classifier cascade \citep{chen:2012, AISTATS13_trapeznikov2013supervised, NIPS15_wang2015efficient}, so that new input is first probed by an inexpensive test then a more expensive one. The goal of a cascaded system is to resolve easy to handle examples early so that the overall system maintains high accuracy at low average costs.

Over time, due to environmental changes or test calibrations, sequential testing protocols (STP) may no longer be accurate, resulting in higher costs. While one can leverage off-line methods such as supervised training of cascades ~\citep{NIPS15_wang2015efficient}, they require new annotated data collection. In many scenarios, new data cannot be collected in-situ, and system shutdown is not an option. In the absence of annotated data, we face a dilemma. While we can observe test outcomes, we cannot ascertain their reliability due to the absence of ground truth, necessitating {\it unsupervised sequential selection (USS)} methods, where an arm represents a test/classifier. Recent works~\citep{AISTATS17_hanawal2017unsupervised,AISTATS19_verma2019online, ACML20_verma2020thompson} propose methods for solving the USS problem; however, they focus exclusively on the non-contextual setting, which in essence requires inputs (people, objects, or products) to be homogeneous, and as such, these methods are unrealistic since contexts (high vs. low risk) can guide the arm selection.

In this context, we propose the contextual USS. In our setup, inputs arrive sequentially, and the learner observes a continuous-valued context as input. While the learner knows the costs of each arm, he does not know the associated stochastic loss. Furthermore, the learner does not benefit from feedback from his arm selection, in contrast to the conventional contextual bandit works \citep{AISTATS11_ContextualBandit_BeygelzimerSchapire}. Thus, while being agnostic to the true loss, the learner must sequentially choose the arm that leads to the smallest total loss, where the total loss is the sum of the cost of using an arm and the mean loss associated with the arm. As such, our proposed problem is a special case of the stochastic partial monitoring problem with contextual inputs \citep[Chapter 37]{LaSze19:book}. 
Most of the prior work on partial monitoring problem is restricted to cases where observed feedback can identify the losses for selected actions. However, in many areas like crowd-sourcing \citep{NIPS17_bonald2017minimax, ICLM18_kleindessner2018crowdsourcing}, resource allocation \citep{NeurIPS19_verma2019censored},  medical diagnosis \citep{COMSNETS20_verma2020unsupervised}, and many others, feedback from actions may not even be sufficient to identify the losses.

While we draw upon several concepts introduced in earlier work \citep{AISTATS17_hanawal2017unsupervised}, there are additional challenges in the contextual case due to the unsupervised nature of the problem. First, unlike vanilla-USS, the loss here is context-dependent. We propose notions of contextual weak dominance as a means to relate observed disagreements to differences in losses between any two arms. We then propose a parameterized Generalized Linear Model (GLM) to model the context-conditional disagreement probability between any two arms and validate the model empirically.

A fundamental technical challenge is in the estimation of disagreement probabilities uniformly across all contexts in the finite time while ensuring sufficient exploration between different arm selection protocols, required for honing in on the optimal selection strategy. In particular, since contexts are continuous-valued, and because we have no control over inputs, the contextual observations, in the finite time, may not persistently span the whole space, and estimates are often unreliable. To this end, we adapt techniques from parameterized contextual bandits \citep{AISTATS11_chu2011contextual,ICML17_li2017provably} for our unsupervised setting. We propose an algorithm based on the principle of optimism, namely, the larger indexed arm in cascade is chosen when uncertain. We show that our algorithm navigates the exploration-exploitation tradeoffs in different ways and lead to sub-linear cumulative regret. We then validate it on several problem instances derived from synthetic and real datasets.

\paragraph{Related Work.}
{\em Stochastic Contextual multi-armed Bandits (SCB):} In each round, the learner observes the context and decides which arm, among a finite number of arms, to apply \citep{AISTATS11_ContextualBandit_BeygelzimerSchapire}. By playing an arm, the learner observes a stochastic reward that depends on the context and the arm selected. The most commonly studied model assumes that each arm is parameterized, and the mean reward of an arm is the inner product of the context and an unknown parameter associated with the arm. Contextual bandits have been applied to problems ranging from online advertising \citep{WWW10_li2010contextual,AISTATS11_chu2011contextual} and recommendations \citep{NIPS08_langford2008epoch} to clinical trials \citep{JASA79_woodroofe1979one} and mobile health \citep{MB17_tewari2017ads}.
{\em Generalized linear models (GLM)} assume that the mean reward is a non-linear link function of the inner product between the context vector and the unknown parameter vector \citep{NIPS10_filippi2010parametric,ICML17_li2017provably}. GLMs are also useful models for the classification problems where rewards, in the context of online learning problems, could be binary \citep{ICML16_zhang2016online,NIPS17_jun2017scalable}. A more challenging non-parameterized version of the stochastic contextual bandits is studied in \citep{ICML14_TamingTheMonster_AgarwalSchapire}. 

Another framework that is closely related to SCB is {\em stochastic linear bandits (SLB)} \citep{JMLR02_auer2002using,COLT08_dani2008stochastic,MOR10_rusmevichientong2010linearly,NIPS11_abbasi2011improved}. In this setup, the environment is parameterized, and there could be uncountably many arms (within some bounded radius), also referred to as decision set. The arms are characterized into their feature vectors, and the mean reward for playing an arm is given as the inner product of the parameter (unknown) and the feature vector associated with the arm. In situations where the decision set is allowed to vary in each round and are finite, SLBs are equivalent to SCBs, where feature vectors correspond to context-arm pairs \citep{WWW10_li2010contextual,ICML17_li2017provably}. 
For our work, we leverage GLMs as models for disagreement probability between any two arms. While it is tempting to reduce contextual USS to SCBs, note that, unlike prior works, we do not observe loss for our action choices, and so conventional algorithms such as LinUCB and UCB-GLM \citep{WWW10_li2010contextual,ICML14_TamingTheMonster_AgarwalSchapire,ICML17_li2017provably} cannot be applied.

Most of the prior work \citep{AISTATS17_hanawal2017unsupervised, AISTATS19_verma2019online, ACML20_verma2020thompson} considered the problem of learning an optimal action but ignored the contextual information. In this work, we incorporated contextual information, which is readily available in many applications.  Exploiting the {\it real-valued} contextual information (features) for improving the arm selection strategy is non-trivial due to the unsupervised nature of the problem where the standard analysis of contextual bandits does not apply. We made necessary modeling assumptions to leverage GLMs to parameterize the disagreement probability between two arms and extended the existing definitions to address the new setup's learnability issues. However, the problem still requires new ideas and analysis methods to derive an efficient algorithm, which poses new technical challenges for analysis.

	\section{Problem Setting}
	\label{sec:problem_setting}

We consider a stochastic contextual bandits problem with $K$ arms. The set of arms is denoted as $[K]$ where $[K] \doteq \{1,2,\ldots, K\}$. In each round $t$, the environment generates a vector $\left(\Xt, \Yt, \{\Yti\}_{i \in [K]}\right)$. The vector $X_t$ denotes the context in round $t$ and forms an independent and identically distributed (IID) sequence drawn from a bounded set $\cX \subset \R^d$ according to an unknown but fixed distribution $\nu$. The binary reward for context $X_t$ is denoted by $Y_t \in \{0,1\}$, which is hidden from the learner. The vector $\left(\{\Yti\}_{i \in [K]}\right) \in \{0, 1\}^{K}$ represents observed feedback at time $t$, where $\Yti$ denotes the feedback observed after playing arm $i$ with $X_t$ as input\footnote{In our setup, an arm $i$ could be a classifier that outputs label $Y^i$. The classifier's input could be a context and any combinations of feedback observed from classifiers coming before the arm $i$ in the cascade. For example, consider a case where each arm represents a crowd-sourced worker. After using the first $i$ crowd-sourced workers, the final label can be a function of predicted labels of the first $i$ crowd-sourced workers.}. We denote the cost for using arm $i$ as $c_i\geq 0$ that is known and the same for all contexts.

In contextual USS, the arms are assumed to be ordered and form a cascade. When the learner selects an arm $i \in [K]$, the feedback from all arms till arm $i$ in the cascade are observed. The expected loss of playing the arm $i$ for a given context $\xt$ is denoted as $\gamma_i(\xt) \doteq \EE{\one{\Yti \neq \Yt|X=\xt}} = \Prob{\Yti \neq \Yt|X=\xt}$, where $\one{A}$ denotes indicator of event $A$. For soundness, we assume that the probability density function of context distribution is strictly positive on $\mathcal{X}$ such that the conditional probabilities are well defined. The total expected loss incurred by playing arm $i$ for context $\xt$ is defined as $\gamma_i(\xt)+\lambda_iC_i$, where $C_i \doteq c_1 + \ldots + c_i$ and $\lambda_i$ is a trade-off parameter that normalizes the incurred cost and the loss of playing arm $i$.

Since the true rewards are hidden from the learner, the expected loss of an arm cannot be inferred from the observed feedback. We thus have a version of the stochastic partial monitoring problem (\cite{MOR06_cesa2006regret,ALT12_bartok2012partial,MOR14_bartok2014partial}, and we refer to it as contextual \underline{u}nsupervised \underline{s}equential \underline{s}election (USS). Let $\bq$ be the unknown joint distribution of $(X, Y, Y^1,Y^2 \ldots, Y^K)$. Henceforth we identify a contextual USS instance as $P \doteq (\bq,\bc)$ where $\bc \doteq (c_1, c_2, \ldots, c_K)$ is the known cost vector of arms. We denote the collection of contextual USS instances as $\USS$. For instance $P \in \USS$, the optimal arm for a context $x_t$  is given as follows: 
\eq{
	\label{equ:optimalArm}
	\ist \in \max\left\{\argmin _{i \in [K]} \left( \gamma_i(\xt)+\lambda_iC_i \right)\right\},
}
where the choice of $\ist$ is risk-averse as we prefer the arm with lower error among the optimal arms. 

The interaction between the environment and a learner is given in Algorithm \ref{alg:ContxUSS}.
\begin{algorithm}[H]
	\caption{Learning on contextual USS instance $(\bq, \bc)$}
	\label{alg:ContxUSS}
	For each round $t$: 
	\begin{enumerate}
		\item \textbf{Environment} chooses a vector $(X_t, \Yt, \{\Yti\}_{i \in [K]})\sim \bq$.
		\item \textbf{Learner} observes a context $X_t=\xt$ and selects an arm $I_t \in [K]$ to stop in cascade.
		\item \textbf{Feedback and Loss:} The learner observes feedback $(\Yt^1, \Yt^2, \ldots, \Yt^{I_t})$ and incurs a total loss $\one{\Yti \neq \Yt|X=\xt} + \lambda_{I_t}C_{I_t}$. 
	\end{enumerate}
\end{algorithm}

The learner's goal is to find an arm for each context such that the cumulative expected loss is minimized. Specifically, for $T$ contexts, we measure the performance of a policy that selects an arm $I_t$ for a context $x_t$ in terms of regret given by
\eq{
	\label{equ:cum_regret}
	\Regret_T = \sum_{t=1}^T\left( \gamma_{I_t}(\xt) + \lambda_{I_t}C_{I_t} - \left(\gamma_\ist(\xt) + \lambda_\ist C_{i^\star_t} \right) \right).
}

We seek policies that yield sub-linear regret, i.e., $\Regret_T/T \rightarrow 0$ as $T \rightarrow \infty$. It implies that the learner collects almost as much reward in the long run as an oracle collects that knew the optimal arm for every context. We say that a problem instance $P \in \USS$ is learnable if there exists a policy such that $\lim\limits_{T \rightarrow \infty}\Regret_T/T = 0$.

In the sequel, we discuss the selection criteria for optimal arm for a given context and the conditions under which instances of $\USS$ are learnable.

		\subsection{Contextual Weak Dominance}			
		\label{ssec:learning}

Next, we introduce the contextual weak dominance property of a problem instance. 
\begin{defi}[Contextual Weak Dominance $(\CWD)$] 
	\label{def:CWD} 
	Let $\ist$ denote optimal arm for context $\xt$. Then the context $\xt$ is said to satisfy weak dominance $(\WD)$ property if
	\eq{
		\label{equ:WDProp}
		\forall j>\ist: C_j - C_\ist > \Prob{\Yts \ne \Ytj |X = \xt}.	
	}
	A problem instance $P \in \USS$  is said to satisfy the $\CWD$ property if all contexts of $P$ satisfy $\WD$ property. We denote the set of all instances in $\USS$ that satisfies $\CWD$ property by $\PCWD$.
\end{defi}

In the following, we use an alternative characterization of the $\CWD$ property, given as
\eq{
    \label{def:Xi}
    \xi(x_t) \doteq \min_{j>\ist}\left\{C_j - C_\ist - \Prob{\Yts \ne \Ytj|X = \xt} \right\} > 0.
}
We define $\xi \doteq \inf_{x \in \cX} \xi(x)$ and assume that $\xi>0$. The larger the value of $\xi$, `stronger' is the $\CWD$ property, and easier it is to identify an optimal arm for given contexts. We later characterize the regret upper bounds of proposed algorithms in terms of $\xi$. We also discuss the case when a fraction of contexts satisfies $\WD$ property in the supplementary material.

\subsection{Selection Criteria for Optimal Arm}
Without loss of generality, we set $\lambda_i=1$ for all $i\in [K]$ as their value can be absorbed into the costs. Since $\ist = \max\big\{\arg\min\limits_{i \in [K]}\left(\gamma_i(x_t)+ C_i \right)\big\}$, it must satisfy following equation:
\begin{subequations}
	\label{eq:cost_exp_err}
	\al{
		&\forall j<\ist \,:\, C_\ist - C_j \leq \gamma_j(\xt)-\gamma_\ist(\xt) \,, \label{eq:cwd1}\\ 
		&\forall j>\ist \,:\, C_j - C_\ist > \gamma_\ist(\xt) - \gamma_j(\xt) \,. \label{eq:cwd2}
	}
\end{subequations}

As the loss of an arm is not observed, the above equations can not lead to a sound arm selection criteria. We thus have to relate the unobservable quantities in terms of the quantities that can be observed. In our setup, we can compare the feedback of two arms, which can be used to estimate the disagreement probabilities between them. For notation convenience, we define $\pijt \doteq \Prob{\Yti \ne \Ytj|X=\xt}$ for $i<j$. The value of $\pijt$ can be estimated as it is observable.
Our next result bounds unobserved error rates differences in terms of their observable disagreement probabilities for a given context.
\begin{restatable}{lem}{ErrProbContx}
	\label{lem:err_prob_contx}
	For any  $i$, $j$, and $x_t\in \mathcal{X}$, $\gamma_{i}(\xt) - \gamma_{j}(\xt) = \pijt - 2\Prob{\Yti = \Yt, \Ytj \ne \Yt| X = \xt}$. 
\end{restatable}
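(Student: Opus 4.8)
The plan is to establish the identity
\[
\gamma_i(\xt) - \gamma_j(\xt) = \pijt - 2\,\Prob{\Yti = \Yt,\ \Ytj \ne \Yt \mid X = \xt}
\]
by decomposing the disagreement event $\{\Yti \ne \Ytj\}$ according to the hidden label $\Yt$. First I would recall that, by the definition in the problem setting, $\gamma_i(\xt) = \Prob{\Yti \ne \Yt \mid X=\xt}$ and similarly for $j$, while $\pijt = \Prob{\Yti \ne \Ytj \mid X=\xt}$. All three quantities live on the same probability space conditioned on $X=\xt$, so the whole argument is an exercise in conditional-probability bookkeeping over the binary variables $\Yti, \Ytj, \Yt \in \{0,1\}$.

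The key step is to partition the event $\{\Yti \ne \Ytj\}$ using whether each of $\Yti, \Ytj$ agrees or disagrees with the true label $\Yt$. Since $\Yti \ne \Ytj$ forces exactly one of them to match $\Yt$, I would write
\[
\{\Yti \ne \Ytj\} = \{\Yti = \Yt,\ \Ytj \ne \Yt\} \ \sqcup\ \{\Yti \ne \Yt,\ \Ytj = \Yt\},
\]
a disjoint union, so that $\pijt$ splits as the sum of the two corresponding conditional probabilities. In parallel, I would decompose each error probability by whether the \emph{other} arm agrees with the truth:
\[
\gamma_i(\xt) = \Prob{\Yti \ne \Yt,\ \Ytj = \Yt \mid X=\xt} + \Prob{\Yti \ne \Yt,\ \Ytj \ne \Yt \mid X=\xt},
\]
and symmetrically for $\gamma_j(\xt)$. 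Subtracting these two decompositions, the common term $\Prob{\Yti \ne \Yt,\ \Ytj \ne \Yt \mid X=\xt}$ cancels, leaving
\[
\gamma_i(\xt) - \gamma_j(\xt) = \Prob{\Yti \ne \Yt,\ \Ytj = \Yt \mid X=\xt} - \Prob{\Yti = \Yt,\ \Ytj \ne \Yt \mid X=\xt}.
\]

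To finish, I would express the first term on the right using the $\pijt$ decomposition: since $\pijt = \Prob{\Yti \ne \Yt,\ \Ytj = \Yt \mid X=\xt} + \Prob{\Yti = \Yt,\ \Ytj \ne \Yt \mid X=\xt}$, I can substitute $\Prob{\Yti \ne \Yt,\ \Ytj = \Yt \mid X=\xt} = \pijt - \Prob{\Yti = \Yt,\ \Ytj \ne \Yt \mid X=\xt}$ into the difference above. This yields $\gamma_i(\xt) - \gamma_j(\xt) = \pijt - 2\,\Prob{\Yti = \Yt,\ \Ytj \ne \Yt \mid X=\xt}$, which is exactly the claim. I expect no genuine obstacle here; the only point demanding care is confirming that the two events in the disjoint-union split are genuinely exclusive and exhaustive within $\{\Yti \ne \Ytj\}$ (they are, because $\Yt$ is also binary, so each of $\Yti, \Ytj$ is either equal to or different from $\Yt$, and $\Yti \ne \Ytj$ rules out the cases where both agree or both disagree). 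Keeping the conditioning on $X=\xt$ attached to every term throughout is the main bookkeeping discipline required.
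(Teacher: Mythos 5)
Your proof is correct and takes essentially the same elementary event-decomposition route as the paper: both arguments are pure conditional-probability bookkeeping over the binary variables $\Yti, \Ytj, \Yt$, differing only in organization — the paper splits each error probability on $\{\Yti = \Ytj\}$ versus $\{\Yti \ne \Ytj\}$ and then adds and subtracts $\Prob{\Yti = \Yt,\ \Yti \ne \Ytj \mid X=\xt}$, whereas you partition the disagreement event by agreement with the hidden label and cancel the both-wrong term, which reaches the same identity slightly more directly. Your care about the disjointness and exhaustiveness of $\{\Yti = \Yt,\ \Ytj \ne \Yt\}$ and $\{\Yti \ne \Yt,\ \Ytj = \Yt\}$ inside $\{\Yti \ne \Ytj\}$ is exactly the binarity fact the paper invokes in its two inline implications, so there is no gap.
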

The detailed proof of \cref{lem:err_prob_contx} and all other missing proofs appear in the supplementary material.

Now, using \cref{lem:err_prob_contx}, we can replace  \cref{eq:cwd1} by
\eq{
	\label{eq:selectDisProbLow}
	\forall j<\ist \,:\, C_{\ist} - C_j \leq  \pjist,
}
which only has observable quantities. 
For $j>\ist$, using the $\CWD$ property, we replace \cref{eq:cwd2} by
\eq{
	\label{eq:selectDisProbHigh}
	\forall j>\ist \,:\, C_j - C_{\ist} >  \pijst.
}

Using \cref{eq:selectDisProbLow} and \cref{eq:selectDisProbHigh}, our next result gives the optimal arm for a given context $x_t$.
\begin{restatable}{lem}{SetBx}
	\label{lem:Bx}
	Let $P \in \PCWD$ and $\Bt = \left\{i: \forall j>i, C_j - C_i > \pijt \right\}\cup \{K\}$. Then the arm $I_t =\min(\Bt)$ is the optimal arm for a context $x_t$.
\end{restatable}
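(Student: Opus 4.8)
The plan is to establish the set identity $\min(\Bt)=\ist$ by a two-sided squeeze: first show that $\ist$ itself belongs to $\Bt$, which yields $\min(\Bt)\le\ist$, and then show that every index $i<\ist$ fails the defining membership condition of $\Bt$, which yields $\min(\Bt)\ge\ist$. Since $\Bt$ contains $K$ by construction it is nonempty and $\min(\Bt)$ is well defined, so the two bounds together force $\min(\Bt)=\ist$, i.e. $I_t$ is the optimal arm $\ist$.

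For the first inclusion, recall $\Bt=\left\{i:\forall j>i,\ C_j-C_i>\pijt\right\}\cup\{K\}$. The membership test for $i=\ist$ reads $C_j-C_\ist>\pijst$ for every $j>\ist$, which is exactly the reformulated weak-dominance inequality \eqref{eq:selectDisProbHigh}, valid because $P\in\PCWD$. (If $\ist=K$ the condition is vacuous and $\ist=K\in\Bt$ trivially.) Hence $\ist\in\Bt$ and $\min(\Bt)\le\ist$.

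For the second inclusion I would fix an arbitrary $i<\ist$ and exhibit a single witness $j>i$ that violates the strict inequality in the definition of $\Bt$; the natural choice is $j=\ist$. Applying \eqref{eq:selectDisProbLow} with dummy index equal to this $i$ (legitimate since $i<\ist$) gives $C_\ist-C_i\le p_{i\ist}^{(t)}$, and $p_{i\ist}^{(t)}$ is precisely the $j=\ist$ instance of the disagreement $\pijt$ occurring in $\Bt$. This contradicts the requirement $C_j-C_i>\pijt$ for all $j>i$, so $i\notin\Bt$. As $i<\ist$ was arbitrary (and each such $i<K$, so it cannot be contributed by the $\cup\{K\}$ term), no index below $\ist$ lies in $\Bt$, giving $\min(\Bt)\ge\ist$ and closing the argument.

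The conceptual work has been front-loaded into \cref{lem:err_prob_contx} and the two reformulations \eqref{eq:selectDisProbLow}--\eqref{eq:selectDisProbHigh}, so the remaining step is short; the only points demanding care are bookkeeping ones. First, one must keep the strict-versus-nonstrict inequalities aligned: the membership test in $\Bt$ is strict ($>$), while \eqref{eq:selectDisProbLow} supplies only a non-strict bound ($\le$), and it is exactly this mismatch that lets the witness $j=\ist$ defeat the strict condition for every $i<\ist$. Second, one should remember that the strictness on the high side in \eqref{eq:selectDisProbHigh} originates in the risk-averse tie-breaking $\ist=\max\{\argmin_{i\in[K]}(\gamma_i(\xt)+C_i)\}$, which guarantees that no $j>\ist$ is itself optimal, so \eqref{eq:cwd2} is genuinely strict. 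These sign and tie-breaking issues, rather than any deep estimate, are the main (and minor) obstacle.
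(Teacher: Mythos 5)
Your proof is correct and takes essentially the same route as the paper's: the paper likewise excludes every $i<\ist$ from $\Bt$ by the non-strict bound \eqref{eq:selectDisProbLow} with the witness $j=\ist$, and concludes $\min(\Bt)=\ist$ since $\ist\in\Bt$ by the $\CWD$ property \eqref{eq:selectDisProbHigh}. If anything, your two-sided squeeze is slightly more explicit than the paper's proof, which leaves the inclusion $\ist\in\Bt$ to the surrounding discussion.
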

By construction, the optimal arm lies in set $\Bt$. Because of \cref{eq:selectDisProbLow}, any sub-optimal arm having smaller index than optimal arm do not satisfy \cref{eq:selectDisProbHigh}, hence it can not be in set $\Bt$. Therefore, the smallest arm of set $\Bt$ is the optimal arm. 
\begin{restatable}{thm}{learnCWD}
	\label{thm:learnCWD}
	The set $\PCWD$ is maximal learnable. 
\end{restatable}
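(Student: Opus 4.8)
The plan is to unpack the phrase ``maximal learnable'' into two obligations: first, that a single algorithm attains $\Regret_T/T \to 0$ on every $P \in \PCWD$ (learnability); second, that $\PCWD$ has no learnable proper superset (maximality). For the latter it suffices to show that for every $P \in \USS \setminus \PCWD$ the enlarged class $\PCWD \cup \{P\}$ is already not learnable, since any learnable $\cQ \supsetneq \PCWD$ would contain some such $P$, and a subset of a learnable class is learnable.

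First I would dispatch learnability by leaning on \cref{lem:Bx}: the optimal arm for a context $x_t$ of a $\CWD$ instance equals $\min(\Bt)$, and $\Bt$ is a functional of the observable disagreement probabilities $\pijt$ alone. Hence it suffices to exhibit a policy that consistently estimates all $\pijt$ uniformly over contexts and plays $\min(\hBt)$ for the plug-in set $\hBt$, which is exactly the role of the paper's GLM-based algorithm. Since $\xi = \inf_{x \in \cX}\xi(x) > 0$ supplies a uniform strict margin in every defining inequality of $\Bt$, once the estimation error of the relevant $\pijt$ drops below $\xi$ the plug-in set coincides with $\Bt$ and the per-round regret vanishes; summing over the sublinear number of rounds with larger error yields vanishing average regret simultaneously for all $P \in \PCWD$, giving learnability.

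For maximality, fix $P \in \USS \setminus \PCWD$. By \cref{def:CWD} there is a set $B \subseteq \cX$ with $\nu(B) > 0$ on which $\WD$ fails, i.e. for each $x \in B$ some $j > \ist$ satisfies $C_j - C_\ist \le \Prob{\Yts \ne \Ytj \mid X = x}$. The key observation, isolated by \cref{lem:err_prob_contx}, is that the learner's entire observation stream depends only on the law of $(X, Y^1, \dots, Y^K)$, whereas the losses $\gamma_i$, and hence the identity of the optimal arm, also depend on the coupling of the hidden label $\Yt$ with the feedback vector through the unobservable term $\Prob{\Yti = \Yt, \Ytj \ne \Yt \mid X = x}$. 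I would therefore build a companion instance $\tilde P$ that leaves $\nu$ and the conditional law of $(Y^1,\dots,Y^K)$ given $X$ untouched (so every $\pijt$, and thus every observable quantity, is identical to that of $P$) but re-specifies the conditional law of $\Yt$ on $B$ so that (i) a different arm becomes optimal on a positive-measure subset of $B$ and (ii) $\tilde P$ itself satisfies $\CWD$, placing $\tilde P \in \PCWD$. Granting such a $\tilde P$, both $P$ and $\tilde P$ lie in $\PCWD \cup \{P\}$ and induce identical observation laws, so any algorithm produces an identically distributed sequence of arm choices under the two instances; since their optimal arms disagree on contexts of positive $\nu$-measure, the algorithm incurs $\Omega(T)$ regret on at least one of them, and $\PCWD \cup \{P\}$ is not learnable.

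The main obstacle is constructing $\tilde P$ with properties (i) and (ii) at once. I would exploit precisely the failure of $\WD$ on $B$: the slack $C_j - C_\ist \le \Prob{\Yts \ne \Ytj \mid X = x}$ is what frees me to re-couple $\Yt$ so that the dominance ordering among arms is reversed on $B$ while the observed disagreements stay fixed. Concretely I would choose, for $x \in B$, a coupling making a designated arm's conditional error vanish (for instance forcing $\Yt$ to agree with one arm's output), read off the induced $\gamma_i(x)$ from \cref{lem:err_prob_contx}, and then verify the strict inequalities of \cref{def:CWD} for $\tilde P$ using these explicit expressions, applying a small uniform perturbation to turn any non-strict inequality into the strict margin $\xi > 0$ requires. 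Care is also needed to retain $B' \subseteq B$ of positive $\nu$-measure where the optimal arm genuinely changes, so that the regret lower bound is linear rather than $o(T)$; this is where the continuity of contexts and the assumption that $\nu$ has a strictly positive density on $\cX$ enter.
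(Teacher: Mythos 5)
Your learnability half is sound and matches the paper in spirit: the paper also reduces learnability under $\CWD$ to the fact that the disagreement probabilities are a sound proxy for the unobservable loss gaps (via \cref{lem:err_prob_contx} and \cref{lem:Bx}, formalized in its Lemmas~\ref{lem:xCostRange1} and~\ref{lem:xCostRange2}), with the estimation carried out later by the algorithm. The gap is in your maximality half, specifically at the step you yourself flag as the main obstacle: constructing, for every $P \in \USS \setminus \PCWD$, a companion $\tilde P \in \PCWD$ with the identical law of $(X, Y^1, \dots, Y^K)$ but a different optimal arm. Such a companion does not always exist. Re-coupling the hidden label so that the optimal arm changes is always possible when $\WD$ fails (e.g.\ set $Y = Y^j$ for the violating $j > \ist$, which is feasible since $\gamma_\ist - \gamma_j$ can be pushed up to $p_{\ist j} \ge C_j - C_\ist$), but forcing the companion to satisfy $\CWD$ can be impossible. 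Concretely, take $K=3$, $C = (0, 0.1, 0.5)$, and a positive-measure set of contexts with $p_{12} = 0.2$, $p_{23} = 0.45$, $p_{13} = 0.4$ (these are jointly realizable: the disagreement indicators satisfy the required parity constraint with a valid joint law). For \emph{any} coupling of $Y$: if the optimal arm is $1$, $\WD$ needs $0.1 > 0.2$, false; if it is $2$, $\WD$ needs $0.4 > 0.45$, false; and arm $3$ can never be optimal, since that would require $\gamma_1 - \gamma_3 \ge 0.5$ while \cref{lem:err_prob_contx} forces $\gamma_1 - \gamma_3 \le p_{13} = 0.4$. So no instance sharing these observables lies in $\PCWD$, and your indistinguishability argument has no second instance inside the class $\PCWD \cup \{P\}$ to play against $P$. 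The slack created by the $\WD$ failure lets you move the optimal arm, but it does not let you repair $\WD$ at the new optimum; these two requirements genuinely conflict for $K \ge 3$.

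The paper sidesteps this by proving a weaker statement than your set-theoretic reading of maximality. Its proof shows that when $\CWD$ fails at a context, the cost gap $C_j - C_\ist$ falls in the ambiguous interval $\left(\gamma_\ist(x_t) - \gamma_j(x_t), \Prob{\Yts \ne \Ytj | X = x_t}\right]$, so two instances with identical disagreement probabilities can have different optimal arms --- i.e., the optimal arm is not identifiable from the observable statistics --- \emph{without} requiring the confounding companion to satisfy $\CWD$. In other words, the paper's maximality is relative to decision rules driven by the disagreement probabilities (the notion inherited from the non-contextual USS literature), not the claim that $\PCWD \cup \{P\}$ is non-learnable for every single $P \notin \PCWD$; indeed, your stronger claim appears false for the instance above, whose observables are detectably inconsistent with $\WD$ under every coupling. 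To repair your write-up you would either have to adopt the paper's identifiability framing, or restrict the maximality claim to those $P$ whose observables remain $\WD$-consistent under some coupling (as happens automatically when $K = 2$).
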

The proof establishes that under the $\CWD$ property, there exists a `sound' arm selection policy that identifies the optimal arm for each context. The sound policy only uses conditional disagreement probabilities between pairs of arms that can be estimated from the feedback of arms.

	\section{Parameterization of Pairwise Disagreement  Probability}
	\label{sec:glmModel}

Since the number of contexts could be much larger (can be infinite) than the learning horizon, in stochastic contextual bandits, a correlation structure is assumed between the reward (loss) and the contexts \citep{JMLR02_auer2002using, WWW10_li2010contextual, ICML17_li2017provably}. It is often realized via parameterization of the arms such that expected rewards (or losses) observed from an arm depend on the unknown parameter. In our setting, we cannot observe a loss for any arm. Hence parameterization of an expected loss of the arms is not useful.  However, we can obtain feedback of two arms for a given context and can compare them. For example, we can check whether two arms' feedback agrees or disagrees for a given context. Thus, we assume a correlation structure on the disagreement probability for a pair of arms across the contexts and parameterize it using generalized linear models. For $i<j$ and context $x_t$, the disagreement probability for $(i,j)$ pair of arms is given via a function $\mu$ as follows:
\eq{
	\label{equ:disProb}
	\Prob{\Yti \ne \Ytj|X=\xt}=\mu(\Phi_{ij}(\xt)^\top \Ts), 
} 
where $\xt \in \R^{d}$, $\Phi_{ij}: \R^{d} \rightarrow \R^{d^\prime}$ is a feature map for some $d^\prime \ge d$,\footnote{Let $\R^{d_{ij}}$ be the space where \cref{equ:disProb} holds for $(i,j)$ pair of arms, and $\Phi_{ij}$ is the feature map that lift $\xt$ from $\R^{d}$ space to $\R^{d_{ij}}$ space. For simplicity, we take $d^\prime = \max_{\forall i < j \le K} d_{ij}$.} and $\Ts \in\R^{d^\prime}$ is the unknown parameter for $(i,j)$ pair.

We assume the following assumptions on context distribution $\nu$ and function $\mu$, which is standard in the GLM bandit literature \citep{NIPS10_filippi2010parametric,ICML17_li2017provably}:
\begin{assu}[GLM]
	\begin{itemize}
		\setlength{\itemsep}{0pt}
		\setlength{\parskip}{3pt}
		\setlength{\itemindent}{-1.75em}
		\item For all $x \in \mathcal{X}$ and $(i,j)$ pairs, $\norm{\Phi_{ij}(x)}_{2} \le 1$.
		\item $\kappa \doteq \inf_{\norm{ x }_2 \le 1, \norm{ \theta - \Ts }_2 \le 1}$ $\dot{\mu}(\Phi_{ij}(x)^\top \theta) > 0$ for all $(i,j)$ pairs.
		\item There exists a constant $\lambda_\Sigma > 0$ such that $\lambda_{min}\left( \EE{\Phi_{ij}(X)\Phi_{ij}(X)^\top} \right) \ge \lambda_\Sigma$ for all $(i,j)$ pairs.
		\item The function $\mu : \R \rightarrow [0,1]$ is continuously differentiable and Lipschitz with constant $k_\mu$. 
	\end{itemize}	
\end{assu}

For our setting, the function $\mu$ is defined as $\mu(z) ={1}/{(1+\mathrm{e}^{-z})}$, which is the logistic function. The logistic function is widely used function for binary classification model and has $k_\mu \le 1/4$.

In contextual USS setup, we can compare the arms' feedback and check whether they agree or not for a given context. These binary observations (agree or disagree) can be treated as noisy samples of the disagreement probability. The noise in the binary observation obtained by comparing the feedback of $(i, j)$ pair of arms in round $t$, is given by 
\als{
	\epsilon_{ij}^{(t)}=
	\begin{cases}
		1 - \mu(\Phi_{ij}(\xt)^\top \Ts ), & \text{with probability } \mu(\Phi_{ij}(\xt)^\top \Ts ) \\
		 - \mu(\Phi_{ij}(\xt)^\top \Ts ), & \text{with probability } \left(1-\mu(\Phi_{ij}(\xt)^\top \Ts) \right)
	\end{cases}	
}
where $\epsilon_{ij}^{(t)}$ is $\F_{t}$-measurable with $\EE{\epsilon_{ij}^{(t)}|\F_t}=0$. Here $\F_t$ denotes sigma algebra generated by history $\left\{\left(X_s, I_s, \left\{Y_s^i\right\}_{i \in [I_s]}\right) \right\}_{s \in [t]}$ till time $t$. Since $\epsilon_{ij}^{(t)}$ is a zero-mean shifted Bernoulli random variable, $\epsilon_{ij}^{(t)}$ satisfies the following sub-Gaussian condition with parameter $\sigma \in (0,1)$:
\eqs{
	\EE{\exp(\lambda\epsilon_{ij}^{(t)})|\F_{t}} \le \exp\left(\frac{\lambda^2\sigma^2}{2}\right), \hspace{2mm} \forall \lambda \in \R.
}

Let $d_{ij}(t) \doteq \one{\Yti \neq \Ytj|X=\xt}$ be the disagreement indicator for a context $\xt$ and $S_{ij}^t$ be the set of indices of contexts for which disagreements are observed for $(i,j)$ pair of arms till round $t$. In round $t$, we estimate $\Ts$, denoted by $\Te$, using the following equation adapted from the maximum likelihood estimator (MLE) used for GLM bandits \citep{NIPS10_filippi2010parametric,ICML17_li2017provably}:
\al{
	\sum_{s\in S_{ij}^t}&\left(d_{ij}(s) - \mu(\Phi_{ij}(x_s)^\top\theta)\right) \Phi_{ij}(x_s) = 0. \label{equ:glm_est}
}

In the next section, we develop an algorithm that exploits \cref{lem:Bx} for selecting the optimal arm to each context. The algorithm replaces the terms $\pijt$ in \cref{lem:Bx} by their optimistic estimates.

	\section{Algorithm for Contextual USS: \ref{alg:CUSS_WD}}
	\label{sec:algorithm}

Our algorithm, named \ref{alg:CUSS_WD}, is based on the {\it optimism-in-the-face-of-uncertainty} (OFU) principle. \ref{alg:CUSS_WD} works as follows: It takes $\delta$ and $m$ as inputs, where $\delta$ is the confidence in the estimated parameters and used for computing confidence bound for $\Ts$ as given by \cref{lem:theta_est_glm}. The choice of $m$ ensures that with probability at least $(1-\delta)$, the sample correlation matrix $V_{ij}^t=\sum_{s \in S_{ij}^t}\Phi_{ij}(x_s)\Phi_{ij}(x_s)^\top$ for each $(i,j)$ pair where $i<j$, is invertible. A high probability upper bound on $m$ is computed using \cref{lem:min_eigen_lb}. The algorithm collects feedback from all arms by selecting the arm $K$ irrespective of the context received for first $m$ rounds. After $m$ rounds, the sample correlation matrix and the estimate of $\Ts$ are computed for each $(i,j)$ pair where $i<j$.

For $t>m$, the learner receives a context $x_t$ and plays the arm $i=1$ and then observe its feedback. For each $(i,j)$ pair and context $\xt$, the upper bound on disagreement probability $\tpijt$ is computed using $\Te$ and confidence bonus $\alpha_{ij}^t\norm{ \Phi_{ij}(x_t)}_{\Vinv}$. Here the notation $\norm{x}^2_A \doteq x^\top Ax$ denotes the weighted $l_2$-norm of vector $x \in \R^d$ with respect to a positive definite matrix $A \in \R^{d\times d}$. The confidence bonus has two terms. The first term $\alpha_{ij}^t$ is a slowly increasing function in $t$ whose value is specified in Lemma \ref{lem:theta_est_glm}, and the second term $\norm{ \Phi_{ij}(x_t)}_{\Vinv}$ decreases to zero as $t$ increases. 

\begin{algorithm}[H]
    \renewcommand{\thealgorithm}{USS-PD}
    \floatname{algorithm}{}
    \caption{Algorithm for Contextual USS using Pairwise Disagreement}
	\label{alg:CUSS_WD}
	\begin{algorithmic}[1]
		\State \textbf{Input:} Tuning parameters: $\delta \in (0,1)$ and $m>0$
		\State Select arm $K$ for first $m$ contexts
		\State $\forall i< j \le K:$ set ${V}_{ij}^{m} \leftarrow \sum_{t=1}^{m}\Phi_{ij}(x_{t}) {\Phi_{ij}(x_{t})}^\top$ and update $\hat\theta_{ij}^m$ by solving \cref{equ:glm_est} 
		\For{$t= m+1, m+2, \ldots$}
			\State Receive context $x_t$. Set $i=1$ and $I_t=0$
			\Do
				\State Play arm $i$ 
				\State $\forall j \in [i+1, K]:$ compute $\tpijt \leftarrow \mu\left(\Phi_{ij}(\xt)^\top \hat\theta_{ij}^{t-1} + \alpha_{ij}^{t-1}\norm{ \Phi_{ij}(\xt)}_{\left({V}_{ij}^{t-1}\right)^{-1}} \right)$
				\State If $\forall j \in [i+1, K]: C_j - C_i > \tpijt$ or $i=K$ then set $I_t = i$ else  set $i=i+1$
			\doWhile{$I_t=0$}
			\State Select arm $I_t$ and observe $Y_t^1, Y_t^2, \dots, Y_t^{I_t}$
			\State $\forall i< j \le I_t:$  update $\Vt \leftarrow {V}_{ij}^{t-1} + \Phi_{ij}(x_{t}) {\Phi_{ij}(x_{t})}^\top$ and $\Te$ by solving \cref{equ:glm_est} 
		\EndFor
	\end{algorithmic}
\end{algorithm}

After computing $\tpijt$, the algorithm checks whether the arm $i$ is the best arm using \cref{eq:selectDisProbHigh} with $\tpijt$ in place of $\pijt$. If the arm $i$ is not the best, then the algorithm plays the next arm, and then the same process is repeated. If the arm $i$ is the best arm for context, then the algorithm stops at that arm with $I_t =i$ for that context. After selecting arm $I_t$, the feedback from arms $1, \ldots, I_t$ are observed. After that, the values of $V^t_{ij}$ are updated, and $\Te$ are re-estimated. The same process is repeated for subsequent contexts.

\begin{rem}
    GLM bandits are well studied but require reward or loss information. In the USS setup, loss of selected arm can not be observed; hence finding the optimal arm is challenging. Due to binary disagreement, \ref{alg:CUSS_WD} uses the MLE estimator for $\Ts$ as used in GLM bandits \citep{NIPS10_filippi2010parametric,ICML17_li2017provably}. However, the feedback structure and the way arms are selected in the USS setup differ from that in the GLM bandits. Further, our analysis needs carefully connecting the regret with the bad events that make \ref{alg:CUSS_WD} selects non-optimal arms.
\end{rem}

\begin{rem}
    We force the algorithm to explore until the correlation matrix $V_{ij}^t$ is invertible for all $(i,j)$ pairs. The invertibility can also be ensured by adding a regularization term \citep{NIPS11_abbasi2011improved,ICML16_zhang2016online,NIPS17_jun2017scalable} to avoid forced exploration. However, the analysis of \ref{alg:CUSS_WD} with regularization term still required to the non-regularized part of the sample correlation matrix becomes invertible. See the supplementary material for the algorithm and its analysis. 
\end{rem}

\subsection{Regret Analysis of \ref{alg:CUSS_WD}}

The following definition is useful in our regret analysis. 
\begin{defi}[Arm Preference ($\succ_t$)]
	\label{def:preference} \ref{alg:CUSS_WD} prefers an arm $i$ over $j$ for context $x_t$ if 
	\begin{subnumcases}	
		{i \succ_t j \doteq }
		C_i - C_j < \tpjit,  &\textnormal{if $j<i$} \label{def_prefer_l} \\
		C_j - C_i > \tpijt,	&\textnormal{if $j>i$} \label{def_prefer_h}.  
	\end{subnumcases}
\end{defi}

Our next result bounds the number of disagreement observations required from a pair of arms say $(i,j)$, such that the smallest eigenvalue of its sample correlation matrix $V_{ij}$ matrices is larger than a fixed value. This result uses the standard results from random matrix theory \citep{Book_vershynin_2012}.
\begin{restatable}{lem}{minEigenLB}
	\label{lem:min_eigen_lb}
	Let ${V}_{ij}^{t} = \sum_{s \in S_{ij}^t}\Phi_{ij}(x_{s}) {\Phi_{ij}(x_{s})}^\top$, $\Sigma_{ij} = \EE{\Phi_{ij}(X)\Phi_{ij}(X)^\top}$, $\Psi$ and $\delta \in (0,1)$ be two positive constants. Then, there exist positive universal constants $C_1$ and  $C_2$ such that the minimum eigenvalue of $\lambda_{min}({V}_{ij}^{t}) \ge \Psi$ with probability at least $1-2\delta/K^2$, iff	
	\eqs{
		|S_{ij}^t| \ge \left(\frac{C_1\sqrt{d^\prime}+C_2\sqrt{\log (K^2/2\delta)}}{\lambda_{min}(\Sigma_{ij})}\right)^2+ \frac{2\Psi}{\lambda_{min}(\Sigma_{ij})}.
	}
\end{restatable}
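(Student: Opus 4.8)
The plan is to read $V_{ij}^t=\sum_{s\in S_{ij}^t}\Phi_{ij}(x_s)\Phi_{ij}(x_s)^\top$ as a sum of independent rank-one matrices and to control its smallest eigenvalue with a non-asymptotic bound on the extreme singular values of a random matrix with independent sub-Gaussian rows \citep{Book_vershynin_2012}. I would apply the lemma in the regime where it is actually used, namely the forced-exploration rounds in which arm $K$ is played irrespective of the context: there the contexts feeding a fixed pair $(i,j)$ are i.i.d.\ draws from $\nu$, independent of the arm-selection. Writing $n=|S_{ij}^t|$, $Z_s=\Phi_{ij}(x_s)$ and $\lambda_{ij}:=\lambda_{min}(\Sigma_{ij})\ge\lambda_\Sigma>0$, the $Z_s$ are independent, satisfy $\norm{Z_s}_{2}\le 1$ by Assumption~1 (hence are sub-Gaussian with a universal $\psi_2$-norm) and have second-moment matrix $\Sigma_{ij}$.

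Since $\Sigma_{ij}$ is invertible, I would whiten: set $W_s=\Sigma_{ij}^{-1/2}Z_s$, which are independent, isotropic ($\EE{W_sW_s^\top}=I$) and sub-Gaussian with $\psi_2$-norm $\rho\lesssim\lambda_{ij}^{-1/2}$ (because $\norm{W_s}_2\le\lambda_{ij}^{-1/2}$). Letting $B$ be the $n\times d^\prime$ matrix with rows $W_s^\top$, so that $B^\top B=\Sigma_{ij}^{-1/2}V_{ij}^t\Sigma_{ij}^{-1/2}$, the random-matrix bound gives, with probability at least $1-2\exp(-ct^2)$,
\[
 s_{min}(B)\ \ge\ \sqrt{n}-C\rho^2\bigl(\sqrt{d^\prime}+t\bigr)\ \ge\ \sqrt{n}-\frac{C'}{\lambda_{ij}}\bigl(\sqrt{d^\prime}+t\bigr),
\]
for universal constants $c,C,C'>0$. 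The bridge back to $V_{ij}^t$ is the elementary inequality, valid for every unit vector $u$,
\[
 u^\top V_{ij}^t\,u=(\Sigma_{ij}^{1/2}u)^\top B^\top B\,(\Sigma_{ij}^{1/2}u)\ \ge\ \lambda_{min}(B^\top B)\,u^\top\Sigma_{ij}u\ \ge\ \lambda_{ij}\,s_{min}(B)^2,
\]
so that $\lambda_{min}(V_{ij}^t)\ge\lambda_{ij}\,s_{min}(B)^2$.

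It then remains to turn this into the stated sample-complexity condition. To force $\lambda_{min}(V_{ij}^t)\ge\Psi$ it suffices that $s_{min}(B)\ge\sqrt{\Psi/\lambda_{ij}}$, i.e.\ $\sqrt{n}\ge \tfrac{C'}{\lambda_{ij}}(\sqrt{d^\prime}+t)+\sqrt{\Psi/\lambda_{ij}}$; squaring with $(a+b)^2\le 2a^2+2b^2$ and choosing $t\asymp\sqrt{\log(K^2/2\delta)}$ (so that the failure probability $2\exp(-ct^2)$ is at most $2\delta/K^2$) yields the sufficient condition
\[
 n\ \ge\ \left(\frac{C_1\sqrt{d^\prime}+C_2\sqrt{\log(K^2/2\delta)}}{\lambda_{min}(\Sigma_{ij})}\right)^2+\frac{2\Psi}{\lambda_{min}(\Sigma_{ij})},
\]
with $C_1,C_2$ universal. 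The converse (``only if'') direction would follow from the matching \emph{upper} bound on $s_{min}(B)$ in the two-sided version of the same random-matrix estimate, which shows that below this threshold $\lambda_{min}(V_{ij}^t)<\Psi$ with the same high probability.

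The main obstacle is keeping $C_1$ and $C_2$ genuinely universal (instance-independent). Whitening inflates the sub-Gaussian norm of the rows, and hence the constant in the singular-value bound, by a factor scaling like $\lambda_{min}(\Sigma_{ij})^{-1/2}$; tracking this carefully is exactly what produces the $\lambda_{min}(\Sigma_{ij})^{-2}$ scaling of the first term, and is the step where one must be most attentive rather than absorbing factors into ``constants''. A secondary point worth making explicit is the independence of $\{\Phi_{ij}(x_s)\}_{s\in S_{ij}^t}$, which I justify by invoking the lemma over the initial pure-exploration phase (arm $K$ always selected), where the set $S_{ij}^t$ is populated by context-independent choices and the contexts are i.i.d.
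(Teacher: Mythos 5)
Your proposal is essentially the paper's own proof: the paper disposes of this lemma by citing Proposition~1 of \citet{ICML17_li2017provably} (itself built on Theorem~5.39 of \citet{Book_vershynin_2012}), whose argument is exactly your whitening-plus-sub-Gaussian-rows route, down to the $\lambda_{min}(\Sigma_{ij})^{-2}$ scaling produced by the inflated $\psi_2$-norm after whitening and the $(a+b)^2 \le 2a^2+2b^2$ split that yields the $2\Psi/\lambda_{min}(\Sigma_{ij})$ term, and your caveats (universality of $C_1,C_2$, and restricting to rounds where the contexts entering $S_{ij}^t$ are i.i.d.\ and selection-independent) are precisely the adaptations the paper alludes to. One minor point: your ``only if'' direction does not actually follow with matching constants from the two-sided singular-value estimate (the best available upper bound, $\lambda_{min}(V_{ij}^t) \le \lambda_{min}(\Sigma_{ij})\, s_{max}(B)^2$, gives a strictly smaller threshold, leaving a constant-factor gap), but the paper's cited result is likewise one-directional, so the ``iff'' in the statement should be read as ``if'' and your proof of that direction is sound.
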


The next result is adapted to our setting from the confidence bounds for  maximum likelihood estimator used in GLM bandits \citep{ICML17_li2017provably}.
\begin{restatable}[Confidence Ellipsoid]{lem}{thetaEstGLM}
	\label{lem:theta_est_glm} 
	Let $m$ be such that $\lambda_{min}(V_{ij}^{m+1}) \ge 1$ for any pair $(i,j)$. Then the following event holds with probability at least $1-2\delta/K^2$ for \ref{alg:CUSS_WD}:
	\begin{align*}
	&\norm{ \Te - \Ts }_{\Vt} \le \alpha_{ij}^t, \;\forall t > m 
	\end{align*}
	where $\alpha_{ij}^t = \frac{2\sigma}{\kappa}\sqrt{\frac{d^\prime}{2}\log\left(1 + \frac{2t}{d^\prime} \right) + \log\left(\frac{K^2}{2\delta}\right)}$.
\end{restatable}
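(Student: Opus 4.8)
The plan is to adapt the confidence-ellipsoid argument of \citet{ICML17_li2017provably} for the MLE to the per-pair disagreement model \cref{equ:glm_est}; the stated confidence level $2\delta/K^2$ is simply what one gets by running the self-normalized bound at that level, chosen so that a later union bound over the fewer than $K^2/2$ pairs yields overall confidence $1-\delta$ (that union bound lives where the lemma is \emph{applied}, not here). Fix a pair $(i,j)$ with $i<j$ and write $\Phi_s \doteq \Phi_{ij}(x_s)$. The noise model gives $d_{ij}(s) = \mu(\Phi_s^\top \Ts) + \epsilon_{ij}^{(s)}$, so with $G(\theta) \doteq \sum_{s \in S_{ij}^t}\mu(\Phi_s^\top\theta)\Phi_s$ the estimating equation becomes $G(\Te) = \sum_s \mu(\Phi_s^\top\Ts)\Phi_s + \sum_s \epsilon_{ij}^{(s)}\Phi_s$, i.e.
\eqs{
    G(\Te) - G(\Ts) = Z_t, \qquad Z_t \doteq \sum_{s \in S_{ij}^t}\epsilon_{ij}^{(s)}\,\Phi_s .
}

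First I would linearize the left-hand side. By the integral mean-value theorem,
\eqs{
    G(\Te) - G(\Ts) = M_t\,(\Te - \Ts), \qquad M_t \doteq \int_0^1 \sum_{s \in S_{ij}^t}\dot{\mu}\!\left(\Phi_s^\top(\Ts + v(\Te - \Ts))\right)\Phi_s\Phi_s^\top\, dv.
}
Provided $\norm{\Te - \Ts}_2 \le 1$, each argument of $\dot\mu$ lies in the ball appearing in the GLM assumption, so $\dot\mu(\cdot) \ge \kappa$ and hence $M_t \succeq \kappa\,\Vt$ since $\Vt = \sum_{s}\Phi_s\Phi_s^\top$. Left-multiplying the identity by $(\Te - \Ts)^\top$ then gives $\kappa\norm{\Te - \Ts}_{\Vt}^2 \le (\Te - \Ts)^\top Z_t$, and Cauchy--Schwarz in the $\Vt$-geometry yields $\norm{\Te - \Ts}_{\Vt} \le \tfrac{1}{\kappa}\norm{Z_t}_{\Vtinv}$.

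It remains to control $\norm{Z_t}_{\Vtinv}$ uniformly in $t$. Since $\{\epsilon_{ij}^{(s)}\}$ is a $\sigma$-sub-Gaussian martingale-difference sequence for $\{\F_s\}$ and each $\Phi_s$ is predictable, I would invoke the self-normalized tail inequality of \citet{NIPS11_abbasi2011improved} at confidence level $2\delta/K^2$, which holds simultaneously for all $t$ and bounds $\norm{Z_t}_{\Vtinv}^2$ by a quantity of the form $2\sigma^2\!\left(\tfrac12\log\det(\Vt) + \log\tfrac{K^2}{2\delta}\right)$; the $\log(K^2/2\delta)$ term is thus exactly the per-pair confidence contribution. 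The hypothesis $\lambda_{min}(V_{ij}^{m+1}) \ge 1$ makes $\Vt$ invertible for $t > m$ and lets it play the role of a unit ridge regularizer, so no explicit regularizer is needed; combined with $\norm{\Phi_s}_2 \le 1$, the determinant--trace inequality bounds $\log\det(\Vt)$ by an expression of the form $d^\prime\log(1 + 2t/d^\prime)$. Substituting the two displays and tracking the absolute constants reproduces $\norm{\Te - \Ts}_{\Vt} \le \alpha_{ij}^t$ for all $t > m$ on an event of probability at least $1 - 2\delta/K^2$, which is the claim; the anytime nature of the self-normalized bound is precisely what delivers the uniformity over $t > m$.

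The main obstacle I anticipate is the circularity in the linearization step: the inequality $M_t \succeq \kappa\Vt$ requires $\norm{\Te - \Ts}_2 \le 1$, which is essentially part of what we are trying to establish. I would close this with the standard bootstrap from the GLM-bandit literature: once $\Vt$ is invertible the negative log-likelihood is strictly convex, so $\Te$ exists and is unique; and since $\lambda_{min}(\Vt)\ge\lambda_{min}(V_{ij}^{m+1})\ge 1$ we have $\norm{\Te - \Ts}_2 \le \norm{\Te - \Ts}_{\Vt}$. A preliminary (crude) application of the self-normalized control of $\norm{Z_t}_{\Vtinv}$ then certifies $\norm{\Te - \Ts}_2 \le 1$ on the high-probability event, so the region hypothesis holds \emph{a posteriori}; making this fully rigorous may require the burn-in length $m$ (determined through \cref{lem:min_eigen_lb}) to guarantee $\lambda_{min}(\Vt)$ is large enough that $\alpha_{ij}^t/\sqrt{\lambda_{min}(\Vt)} \le 1$, or equivalently projecting the MLE onto the unit ball around $\Ts$. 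A secondary, purely technical point is matching the self-normalized inequality, usually stated with an additive ridge term, to the unregularized $\Vt$; this is again exactly where $\lambda_{min}(V_{ij}^{m+1}) \ge 1$ does the work, supplying both invertibility and the determinant lower bound the tail inequality needs.
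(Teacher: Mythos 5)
Your proposal takes essentially the same route as the paper's own proof: the paper obtains $\norm{\Te - \Ts}_{\Vt} \le \frac{1}{\kappa}\norm{\sum_{s\in S_{ij}^t}\epsilon_s\Phi_{ij}(x_s)}_{\Vtinv}$ by citing Eq.~(26) and Lemma~8 of \cite{ICML17_li2017provably} (precisely the mean-value-theorem linearization you re-derive, with the consistency requirement $\norm{\Te-\Ts}_2 \le 1$ absorbed into that citation), then transfers to the ridge matrix $\oVt = \tfrac{1}{2}\mathrm{I}_{d^\prime} + \Vt$ with factor $(1-\lambda)^{-1/2}=\sqrt{2}$ using $\lambda_{min}(V_{ij}^{m+1})\ge 1$, applies Theorem~1 of \cite{NIPS11_abbasi2011improved} at level $2\delta/K^2$, and finishes with the determinant--trace bound (\cref{lem:detVt} with $\lambda=1/2$, $n_{ij}^t \le t$), exactly matching your steps and constants. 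Your explicit bootstrap for the region hypothesis is the one place you are more careful than the paper, which handles that subtlety only implicitly through the citation.
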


The regret analysis of GLM bandits hinges on bounding the instantaneous regret in each round, which is tied to the estimation error of the GLM parameters. Due to the unsupervised setting and cascade structure, this way of bounding regret does not work in our setup. Our analysis goes by bounding the number of pulls of the sub-optimal arms. However, unlike standard bandits, we have to distinguish whether the sub-optimal arm pulled by \ref{alg:CUSS_WD} is on the `left' or `right' of the optimal arm in the cascade. It requires our analysis to handle both the cases carefully. Since \ref{alg:CUSS_WD} uses a similar MLE estimator for parameter estimation as in GLM bandits, we only adapt their asymptotic normality results. Our next results give conditions when \ref{alg:CUSS_WD} prefers a sub-optimal arm for a context. 
\begin{restatable}{lem}{subOptimalLowerSelection}
	\label{lem:subOptimalLowerSelection}
	Let $\theta \in \TCWD$. Then \ref{alg:CUSS_WD} prefers any sub-optimal arm $l < i^\star_t$ for context $x_t$ with probability at most $\delta/2$.
\end{restatable}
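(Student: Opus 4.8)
The plan is to show that \ref{alg:CUSS_WD} can prefer a left sub-optimal arm only when one of the confidence ellipsoids of \cref{lem:theta_est_glm} fails, and then to union-bound those failure probabilities. First I would reduce to a single pair $(l,\ist)$. By \cref{def:preference}, branch \cref{def_prefer_h} (which applies since $\ist > l$), \ref{alg:CUSS_WD} prefers $l$ over $\ist$ exactly when $C_\ist - C_l > \tplits$; this inequality is in particular a necessary condition for the algorithm to stop at any arm $l < \ist$. On the other hand, since $\ist$ is the true optimal arm, the selection inequality \cref{eq:selectDisProbLow} — obtained from \cref{lem:err_prob_contx} together with \cref{eq:cwd1} — gives the \emph{deterministic} lower bound $C_\ist - C_l \le \plits$ for every $l < \ist$. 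Consequently, preferring $l$ forces $\tplits < C_\ist - C_l \le \plits$; that is, the optimistic index must underestimate the true disagreement probability.

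The key step is to rule out this underestimation on the good event. Writing $\tplits = \mu\big(\Phi_{l\ist}(x_t)^\top \hat\theta_{l\ist}^{t-1} + \alpha_{l\ist}^{t-1}\norm{\Phi_{l\ist}(x_t)}_{(V_{l\ist}^{t-1})^{-1}}\big)$ and $\plits = \mu(\Phi_{l\ist}(x_t)^\top\theta_{l\ist}^\star)$, and using that the logistic link $\mu$ is strictly increasing, the inequality $\tplits \ge \plits$ is equivalent to $\Phi_{l\ist}(x_t)^\top(\theta_{l\ist}^\star - \hat\theta_{l\ist}^{t-1}) \le \alpha_{l\ist}^{t-1}\norm{\Phi_{l\ist}(x_t)}_{(V_{l\ist}^{t-1})^{-1}}$. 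On the confidence event $\{\norm{\hat\theta_{l\ist}^{t-1} - \theta_{l\ist}^\star}_{V_{l\ist}^{t-1}} \le \alpha_{l\ist}^{t-1}\}$ supplied by \cref{lem:theta_est_glm}, this follows immediately from Cauchy–Schwarz applied to the dual norms $\norm{\cdot}_{(V_{l\ist}^{t-1})^{-1}}$ and $\norm{\cdot}_{V_{l\ist}^{t-1}}$. Hence on this event $\tplits \ge \plits \ge C_\ist - C_l$, so $l$ is not preferred; equivalently, $\{l \succ_t \ist\}$ is contained in the complement of the confidence event, which has probability at most $2\delta/K^2$.

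Finally I would union-bound over $l$. The algorithm prefers (and hence could stop at) some arm to the left of $\ist$ only if $l \succ_t \ist$ holds for at least one of the at most $\ist - 1 \le K-1$ indices $l < \ist$, so summing the per-pair bounds gives probability at most $(K-1)\cdot 2\delta/K^2 = \delta\cdot 2(K-1)/K^2 \le \delta/2$, where the final inequality is just $(K-2)^2 \ge 0$ and so holds for every integer $K \ge 2$. The step requiring the most care is the monotonicity/direction argument in the key step: one must verify that the optimistic shift $+\,\alpha_{l\ist}^{t-1}\norm{\Phi_{l\ist}(x_t)}_{(V_{l\ist}^{t-1})^{-1}}$ enters with the correct sign so that the upper-confidence index dominates the truth, and one must track the index bookkeeping so that the radius certified by \cref{lem:theta_est_glm} indeed controls the $V_{l\ist}^{t-1}$-weighted deviation actually used by the algorithm at round $t$.
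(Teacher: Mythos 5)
Your proof is correct and follows essentially the same route as the paper's: you bound the preference event $\{l \succ_t \ist\}$ by the pairwise inequality from \cref{def_prefer_h}, use the optimality condition \cref{eq:selectDisProbLow} to get the deterministic bound $C_\ist - C_l \le \plits$, show via Cauchy--Schwarz on the confidence event of \cref{lem:theta_est_glm} that the optimistic index satisfies $\tplits \ge \plits$, and union-bound over the at most $K-1$ arms left of $\ist$. If anything, your write-up is slightly more careful than the paper's: you make the arithmetic $(K-1)\cdot 2\delta/K^2 \le \delta/2$ explicit via $(K-2)^2 \ge 0$ and correctly track the $t-1$ superscripts on $\hat\theta$ and $V$ used by the algorithm, both of which the paper glosses over.
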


\begin{restatable}{lem}{subOptimalUpperSelection}
	\label{lem:subOptimalUpperSelection}
	Let $\theta \in \TCWD$. If \ref{alg:CUSS_WD} prefers a sub-optimal arm $h > i^\star_t$ for context $x_t$ then
	\begin{equation*}
		2k_\mu \alpha_{i^\star_t h}^t > \xi_{i^\star_t h}(x_t)\sqrt{\lambda_{min}(\Vths)}.
	\end{equation*} 
	where $\xi_{\ist h} = C_h - C_\ist - \pihts$ and $\alpha_{ij}^t$ is given by \cref{lem:theta_est_glm}.
\end{restatable}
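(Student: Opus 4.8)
The plan is to read off what ``preferring'' arm $h$ means from \cref{def:preference}, turn it into an inequality comparing the optimistic disagreement estimate $\tpihts$ with the cost gap, and then control the gap between this optimistic estimate and the true disagreement probability $\pihts$ using the confidence ellipsoid of \cref{lem:theta_est_glm} together with the Lipschitzness of $\mu$. Throughout I would work on the high-probability event (probability at least $1-2\delta/K^2$) on which \cref{lem:theta_est_glm} holds, so the stated implication is to be understood deterministically on that event.

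First I would unwind the preference. Since $h>\ist$, \ref{alg:CUSS_WD} prefers $h$ over the optimal arm $\ist$ exactly when $h \succ_t \ist$, which by the $j<i$ branch \cref{def_prefer_l} of \cref{def:preference} (taking $i=h$, $j=\ist$) reads $C_h - C_\ist < \tpihts$. Subtracting the true disagreement probability $\pihts$ from both sides and recalling $\xi_{\ist h}(\xt) = C_h - C_\ist - \pihts$, this is equivalent to
\begin{equation*}
	\xi_{\ist h}(\xt) < \tpihts - \pihts,
\end{equation*}
so it suffices to upper bound the right-hand side.

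Next I would bound this gap. Writing $\tpihts = \mu\bigl(\Phi_{\ist h}(\xt)^\top\Teht + \alpha_{\ist h}^t\norm{\Phi_{\ist h}(\xt)}_{\Vinvhs}\bigr)$ and $\pihts = \mu\bigl(\Phi_{\ist h}(\xt)^\top\Tsht\bigr)$, I pass to the arguments using the Lipschitz property of $\mu$ (constant $k_\mu$), then split the difference of arguments into the estimation error $\Phi_{\ist h}(\xt)^\top(\Teht-\Tsht)$ and the added bonus $\alpha_{\ist h}^t\norm{\Phi_{\ist h}(\xt)}_{\Vinvhs}$. The estimation error is handled by Cauchy--Schwarz in the dual norms, $|\Phi_{\ist h}(\xt)^\top(\Teht-\Tsht)| \le \norm{\Phi_{\ist h}(\xt)}_{\Vinvhs}\,\norm{\Teht-\Tsht}_{\Vths}$, and the last factor is at most $\alpha_{\ist h}^t$ by \cref{lem:theta_est_glm}; since the bonus contributes an identical term, the two add to
\begin{equation*}
	\tpihts - \pihts \le 2k_\mu\,\alpha_{\ist h}^t\,\norm{\Phi_{\ist h}(\xt)}_{\Vinvhs}.
\end{equation*}
Finally, using the boundedness assumption $\norm{\Phi_{\ist h}(\xt)}_2 \le 1$ and $\norm{\Phi_{\ist h}(\xt)}_{\Vinvhs}^2 \le \norm{\Phi_{\ist h}(\xt)}_2^2/\lambda_{min}(\Vths)$, I obtain $\norm{\Phi_{\ist h}(\xt)}_{\Vinvhs} \le 1/\sqrt{\lambda_{min}(\Vths)}$; chaining the three displays gives $\xi_{\ist h}(\xt) < 2k_\mu\alpha_{\ist h}^t/\sqrt{\lambda_{min}(\Vths)}$, which rearranges to the claim $2k_\mu\alpha_{\ist h}^t > \xi_{\ist h}(\xt)\sqrt{\lambda_{min}(\Vths)}$.

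The main obstacle is the bookkeeping in the middle step: one must see that the optimism bonus and the estimation error each contribute exactly one $\alpha_{\ist h}^t\norm{\Phi_{\ist h}(\xt)}_{\Vinvhs}$ term (hence the factor $2$), and that \cref{lem:theta_est_glm} is invoked on the matching design matrix so the whole chain is valid on its high-probability event. A minor subtlety is the time-index convention: \ref{alg:CUSS_WD} forms $\tpihts$ from the quantities available at the start of round $t$ (written $\hat\theta^{t-1}_{\ist h}$ and $V^{t-1}_{\ist h}$ in the pseudocode), which I identify with $\Teht$ and $\Vths$ to match the statement.
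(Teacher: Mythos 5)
Your proposal is correct and follows essentially the same argument as the paper's proof: unwind $h \succ_t \ist$ via \cref{def_prefer_l}, use the Lipschitzness of $\mu$, the triangle inequality, Cauchy--Schwarz, and the confidence ellipsoid of \cref{lem:theta_est_glm} to bound $\tpihts - \pihts$ by $2k_\mu\alpha_{\ist h}^t\norm{\Phi_{\ist h}(\xt)}_{\Vinvhs}$, and finish with $\norm{\Phi_{\ist h}(\xt)}_{\Vinvhs} \le 1/\sqrt{\lambda_{min}(\Vths)}$ and $\norm{\Phi_{\ist h}(\xt)}_2 \le 1$. The only discrepancy is cosmetic: the paper's intermediate display swaps the roles of $\Tsht$ and $\Teht$ inside $\mu$ (a harmless typo, since the subsequent absolute-value step is symmetric), which your version writes correctly.
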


Let $m \doteq C\lambda_\Sigma^{-2}\left(d^\prime+\log(k^2/2\delta)\right) + 2\lambda_\Sigma^{-1}$, where $C>0$ is the universal constant and $R_{max} \doteq \max_{i\in [K], x\in \mathcal{X}}$ $ \left[C_i +  \gamma_{i}(x) - \left( C_{i^\star} + \gamma_{i^\star}(x)\right)\right]$, where $i^\star$ is the optimal arm for context $x$.
Now we state the regret upper bound of \ref{alg:CUSS_WD}.
\begin{restatable}[Regret Upper Bound]{thm}{cumRegGLM}
	\label{thm:cum_reg_glm}
	Let $\theta \in \TCWD$, $\delta \in (0,1)$, Assumption 1 holds, and $\xi_h = \min\limits_{t \ge 1} \xi_{i^\star_t h}(x_t)$. Then with probability at least $1-2\delta$, the regret of \ref{alg:CUSS_WD} for $T > m$ contexts is
	\begin{align*}
		\Regret_T &\le R_{max}\Bigg[m + \sum_{h=2}^{K} \Bigg(\hspace{-1mm} \Bigg(\frac{C_1\sqrt{d^\prime} + C_2\sqrt{\log \left(\frac{K^2}{2\delta}\right)}}{\lambda_{\Sigma}}\Bigg)^2  \hspace{-2mm} + \frac{16}{\lambda_{\Sigma}}\\
		&\qquad\left(\frac{k_\mu\sigma}{\xi_{h}\kappa}\right)^2 \left(\frac{d^\prime}{2}\log\left(1 + \frac{2T}{d^\prime} \right) + \log\left(\frac{K^2}{2\delta}\right) \right)\Bigg)\Bigg].
	\end{align*} 
\end{restatable}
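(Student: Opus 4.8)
The plan is to turn the regret into a counting problem. Since every instantaneous term in $\Regret_T$ is at most $R_{max}$ by definition of $R_{max}$, it suffices to bound the number of rounds in which \ref{alg:CUSS_WD} selects a sub-optimal arm, i.e. $I_t \neq \ist$. The first $m$ rounds are forced exploration of arm $K$ and contribute at most $m$ such rounds. For $t>m$ I would split the sub-optimal selections into \emph{left} errors ($I_t<\ist$) and \emph{right} errors ($I_t>\ist$), and work on the good event on which all the confidence ellipsoids of \cref{lem:theta_est_glm} and all the minimum-eigenvalue bounds of \cref{lem:min_eigen_lb} hold simultaneously. Each of these fails for a fixed pair with probability at most $2\delta/K^2$, so a union bound over the $\le K^2/2$ pairs, applied to both families, yields a good event of probability at least $1-2\delta$, matching the claimed confidence.

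On this event the optimistic estimates dominate the truth, $\tpijt\ge\pijt$, so \cref{eq:selectDisProbLow} shows that the stopping test at any arm $l<\ist$ cannot be satisfied: $C_{\ist}-C_l\le p_{l\ist}^{(t)}\le \tilde p_{l\ist}^{(t)}$ contradicts the stopping condition $C_{\ist}-C_l>\tilde p_{l\ist}^{(t)}$. This is exactly the content of \cref{lem:subOptimalLowerSelection}, whose $\delta/2$ is absorbed into the good event; hence left errors never occur and only right errors need to be counted.

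For the right errors I would exploit the cascade feedback structure: a pair $(i,j)$ with $i<j$ is observed in round $t$ precisely when $I_t\ge j$, so $|S_{ij}^t|$ depends only on the deeper index $j$, and by \cref{lem:min_eigen_lb} all pairs sharing deeper index $h$ obtain the same eigenvalue lower bound from a common observation count. By \cref{lem:subOptimalUpperSelection}, whenever arm $h>\ist$ is preferred (triggering a right error through the pair $(\ist,h)$) we must have $\lambda_{min}(\Vths)<\Psi_h\doteq 4k_\mu^2(\alpha_{\ist h}^t)^2/\xi_h^2$, using $\xi_{\ist h}(x_t)\ge\xi_h$. Substituting $\alpha_{\ist h}^t$ from \cref{lem:theta_est_glm} and $\lambda_{min}(\Sigma_{ij})\ge\lambda_\Sigma$ into the contrapositive of \cref{lem:min_eigen_lb} shows that once the depth-$h$ observation count reaches $N_h\doteq\big((C_1\sqrt{d^\prime}+C_2\sqrt{\log(K^2/2\delta)})/\lambda_\Sigma\big)^2+2\Psi_h/\lambda_\Sigma$, the eigenvalue exceeds $\Psi_h$ and arm $h$ can no longer induce a right error. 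Summing $N_h$ over $h=2,\dots,K$, adding the $m$ exploration rounds, and multiplying by $R_{max}$ recovers the stated bound (up to the routine arithmetic of pushing $(\alpha^T)^2$ through, which produces the $\tfrac{16}{\lambda_\Sigma}(k_\mu\sigma/\xi_h\kappa)^2$ factor).

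The hard part will be the charging step that makes $\sum_h N_h$ a legitimate cap on the number of right errors. Because $|S_{ij}^t|$ is governed by the deeper index, the arm $h$ that is preferred over $\ist$ need not coincide with the selected arm $I_t$, and in principle a blocking arm could sit deeper than $I_t$, so that the offending round does not immediately increment the count that \cref{lem:min_eigen_lb} requires. I would resolve this using the depth-coupling just noted—all pairs ending at a given depth advance together—together with the optimism that drives the cascade deeper precisely on under-explored pairs, arguing that each right error makes progress on some depth counter still below its threshold $N_h$; carefully discharging this (and the strict-versus-non-strict boundary hidden in \cref{def:preference}) is the technical heart of the argument, whereas the eigenvalue and confidence estimates are imported wholesale from \cref{lem:min_eigen_lb} and \cref{lem:theta_est_glm}.
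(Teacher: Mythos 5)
Your proposal is essentially the paper's own proof: the same reduction $\Regret_T \le R_{max}\sum_{t}\one{I_t \ne \ist}$, the same left/right split with left errors eliminated via \cref{lem:subOptimalLowerSelection}, and the same per-depth counting of right errors obtained by feeding $\Psi = \left(2k_\mu\alpha_{ih}^T/\xi_h\right)^2$ from \cref{lem:subOptimalUpperSelection} into \cref{lem:min_eigen_lb}, summing the resulting thresholds over $h=2,\dots,K$, adding the $m$ forced-exploration rounds, and applying the same union-bound accounting for the $1-2\delta$ confidence. The one step you single out as the technical heart — that the blocking arm $h$ with $h \succ_t \ist$ need not coincide with the selected arm $I_t$, so a right-error round need not advance the depth-$h$ observation counter — is not actually discharged in the paper either: its proof passes from the preference count $\sum_t \one{h \succ_t \ist,\, \ist < h}$ to the selection count $\sum_t \one{I_t = h,\, \ist < h}$ without justification and only then invokes the depth-coupling (selecting arm $h$ observes every pair $(i,j)$ with $i<j\le h$), so the charging argument you flag as open is precisely where the published proof is loose rather than a gap peculiar to your write-up.
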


\begin{restatable}{cor}{orderCumRegGLM}
	\label{cor:cum_reg_glm}
	Let technical conditions stated in \cref{thm:cum_reg_glm} hold. Then with probability at least $1-2\delta$
	\begin{equation*}
        \Regret_T \le O\left({Kd^\prime\log(T)}/{\xi^2}\right).	    
	\end{equation*}
\end{restatable}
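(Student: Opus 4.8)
The plan is to obtain the corollary directly from the explicit bound of \cref{thm:cum_reg_glm} by absorbing every problem-dependent constant into the $O(\cdot)$ notation and isolating the leading dependence on $K$, $d^\prime$, $T$, and $\xi$. Concretely, I would treat $R_{max}$, $\lambda_\Sigma$, $k_\mu$, $\sigma$, $\kappa$, the universal constants $C_1,C_2$, and the confidence parameter $\delta$ as fixed, so that the prefactor $R_{max}$ together with the factors $16/\lambda_\Sigma$ and $(k_\mu\sigma/\kappa)^2$ all become multiplicative constants hidden inside $O(\cdot)$.

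The first step is to replace the per-pair gaps $\xi_h$ by the uniform gap $\xi$. Recall $\xi(x_t) = \min_{j>\ist}\xi_{\ist j}(x_t)$ and $\xi \doteq \inf_{x\in\cX}\xi(x) > 0$. For any fixed $h$ and any round $t$ with $h>\ist$, we have $\xi_{\ist h}(x_t) \ge \min_{j>\ist}\xi_{\ist j}(x_t) = \xi(x_t) \ge \xi$; taking the minimum over $t$ yields $\xi_h \ge \xi$, hence $1/\xi_h^2 \le 1/\xi^2$. This bounds every occurrence of $\xi_h$ in the sum of \cref{thm:cum_reg_glm} by the single quantity $\xi$.

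The second step is to extract the dominant growth. Within each summand the only factor depending on $T$ is $\tfrac{d^\prime}{2}\log(1+2T/d^\prime) = O(d^\prime\log T)$; the additive $\log(K^2/2\delta)$ term and the leading bracket $\big((C_1\sqrt{d^\prime}+C_2\sqrt{\log(K^2/2\delta)})/\lambda_\Sigma\big)^2$ are both $T$-independent and of order $O(d^\prime+\log K)$. Combined with $1/\xi_h^2 \le 1/\xi^2$, each of the $K-1$ summands is therefore $O(d^\prime\log T/\xi^2)$, so the sum over $h$ contributes the factor $K$, giving $O(Kd^\prime\log T/\xi^2)$.

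Finally I would verify that the remaining pieces are lower order. The burn-in $m = C\lambda_\Sigma^{-2}(d^\prime+\log(K^2/2\delta)) + 2\lambda_\Sigma^{-1} = O(d^\prime+\log K)$ and the aggregated $T$-independent contributions $O(K(d^\prime+\log K))$ are dominated by $O(Kd^\prime\log T/\xi^2)$ for all large $T$, since the $\log T$ factor eventually exceeds every $T$-free term. Collecting everything, and noting that the whole derivation is performed on the same high-probability event as \cref{thm:cum_reg_glm}, gives $\Regret_T \le O(Kd^\prime\log T/\xi^2)$ with probability at least $1-2\delta$. There is no genuine technical obstacle here; the only bookkeeping care is in confirming the $\log K$ terms are swept into the $Kd^\prime$ scaling (which holds since $\log K = o(K)$), so the entire argument is a routine simplification of the bound in \cref{thm:cum_reg_glm}.
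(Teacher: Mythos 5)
Your proposal is correct and matches the paper's (implicit) treatment: the corollary is stated as an immediate consequence of \cref{thm:cum_reg_glm}, obtained exactly as you describe by absorbing $R_{max}$, $\lambda_\Sigma$, $k_\mu$, $\sigma$, $\kappa$, $\delta$ and the universal constants, bounding each $\xi_h$ below by $\xi$ (the paper itself sets $\xi \doteq \min_{h\ge 2}\xi_h$, making this step trivial), and letting the $d^\prime\log T$ term dominate the $T$-free contributions, including $m$. Your closing aside that $\log K = o(K)$ is unnecessary --- the $K\log K$ pieces are already swallowed by $O(Kd^\prime\log T/\xi^2)$ for large $T$ via the $\log T$ domination you invoked --- but this is cosmetic and does not affect correctness.
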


The regret of \ref{alg:CUSS_WD} for instance $\theta \in \TCWD$ is logarithmic in $T$ and grows linearly with $d^\prime$ and $K$. The regret is inversely dependent on the value of $\xi \doteq \min\limits_{h\ge 2} \xi_h$ (measure how well $\CWD$ holds), which implies the problem instance with smaller $\xi$ has more regret and vice-versa. The value of $\xi$ is analogous to the minimum sub-optimality gap in the standard Multi-Armed Bandits setting. With a large context set, $\xi$ can be small, and its inverse relation in the regret captures the difficulty of the USS problem.

	\section{Experiment}
	\label{sec:experiments}

We evaluate the performance of \ref{alg:CUSS_WD} on different problem instances derived from synthetic and real datasets. In our experiments, the data samples are treated as contexts. The labels of contexts are known but are never revealed to the algorithm. We use the labels to train classifiers offline that act as arms. Arm $i$ represents a logistic classifier with trained parameter $\theta_i$. A context (data sample) $x$ is assigned label $1$ from the $i$-th classifier with probability $\mu(x^\top\theta_i)$ and label $0$ with probability $1-\mu(x^\top\theta_i)$. The disagreement labels for $(i,j)$ pair is computed using the labels of classifier $i$ and $j$. To satisfy \cref{equ:disProb}, we use the polynomial kernel of degree two for mapping context into higher-dimensional space. Unlike other kernels, the polynomial kernel uses a well-defined feature map to lift the contexts into fixed, higher-dimensional space. 
The details of the used problem instances are as follows.

\textbf{Synthetic Dataset:} We consider $3$-dimensional synthetic dataset with $5000$ data samples. Each sample is represented by $x=(x_1, x_2, x_3)$, where the value of $x_j$ is drawn uniformly at random from $(-1, 1)$. A sample $x$ is labeled $0$ if the value of $(x_1 + x_1x_2 + x_3^2)$ is negative otherwise it is labeled $1$. We train five logistic classifiers on this synthetic dataset by varying the regularization parameter.  We then assign a positive cost to each classifier and order them by their increasing cost. We vary the cost of using classifiers to get different problem instances (see details in the supplementary material). 

\textbf{Real Datasets:} We applied our algorithm on PIMA Indian Diabetes \citep{UCI16_pima2016kaggale} dataset. Each sample has $8$ features related to the conditions of the patient. We split the features into three subsets and train a logistic classifier on each subset. We associate 1st classifier with the first $6$ features as input. These features include patient history/profile. The 2nd classifier, in addition to the $6$ features, utilizes the feature on the glucose tolerance test, and the 3rd classifier uses all the previous features and the feature that gives values of insulin test. Due to space constraints, the experiment results on Heart Disease dataset \citep{HEART98_robert1988va, UCI17_Dua2017} are given in the supplementary material.

\subsection{Experiments Results} 
We compare the performance of \ref{alg:CUSS_WD} on four problem instances derived from the synthetic dataset. The instances vary based on the cost of arms. All contexts in Instance $1$ do not satisfy $\WD$ property; hence it suffers linear regret as shown in \cref{fig:syn}. For the remaining instances, we set costs such that the value of $\xi$ increasing from Instance $2$ to $4$. As expected, the regret decreases from Instance $2$ to $4$, as seen in \cref{fig:syn}. We also compare \ref{alg:CUSS_WD} against an algorithm where the learner receives true labels as feedback. In particular, the learner knows whether the classifier's output is correct or not and can estimate their error rates. We implement this `supervised' setting by replacing disagreement probability in \cref{eq:selectDisProbHigh} with estimated error rates. As expected, the regret with supervision has lower than the \ref{alg:CUSS_WD} regret (unsupervised) in \cref{fig:supComp}. It is qualitatively interesting because these plots demonstrate that, in typical cases, our unsupervised algorithm can eventually learn to perform as good as an algorithm with knowledge of true labels. 

\begin{figure}[H]
	\captionsetup[subfigure]{justification=centering}
	\begin{subfigure}[b]{0.30\linewidth}
		\centering
		\includegraphics[scale=0.30]{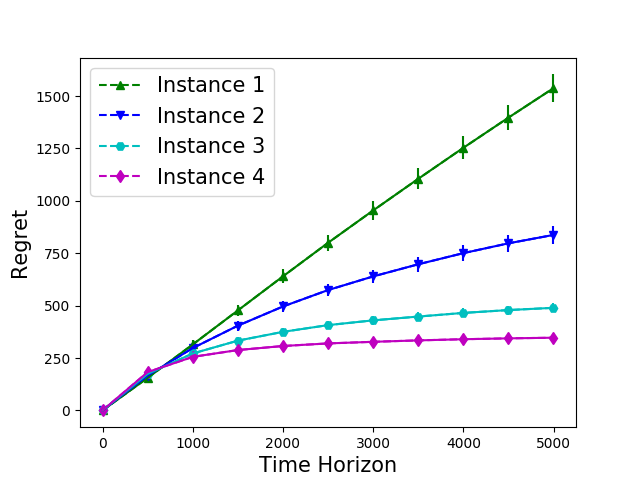}
		\caption{\small Synthetic dataset: Regret for different instance}
		\label{fig:syn}
	\end{subfigure}
	\quad
	\begin{subfigure}[b]{0.30\linewidth}
		\centering
		\includegraphics[scale=0.30]{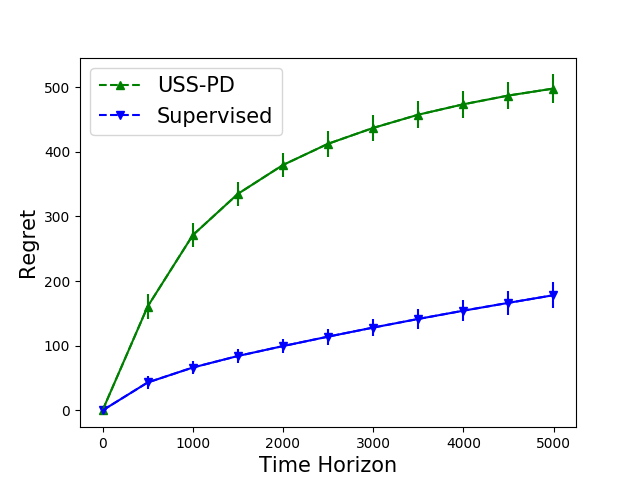}
		\caption{\small Supervised Setting (Instance 3 of Synthetic Dataset)}
		\label{fig:supComp}
	\end{subfigure}
	\quad
	\begin{subfigure}[b]{0.275\linewidth}
		\centering
		\includegraphics[scale=0.275]{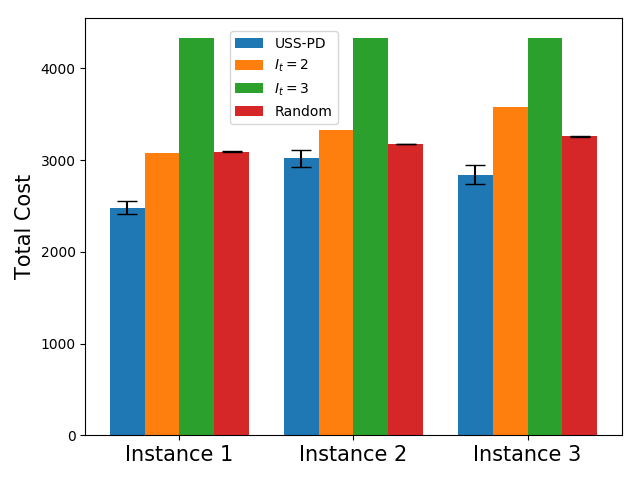}
		\caption{\small Total cost for PIMA Indian Diabetes dataset}
		\label{fig:diabetes}
	\end{subfigure}	
	\caption{\small Performance of \ref{alg:CUSS_WD} on different problem instances derived from synthetic and real datasets.}
	\label{fig:ussExp}
\end{figure}

We derive three problem instances from PIMA Indian Diabetes dataset by varying the costs of using classifies. Since all contexts of these problem instances do not satisfy $\WD$ property (see details in the supplementary material), we used cumulative total expected cost as a performance measure, where the cumulative total expected cost is given by $\sum_{t=1}^T(\gamma_{I_t}(x_t) + C_{I_t})$. We compare the performance of \ref{alg:CUSS_WD} with three baseline policies -- the first baseline policy uses the third classifier irrespective of contexts, and it is denoted as policy `$I_t=3$' (plays arm $3$ in each round). The second baseline policy uses the second classifier for all contexts, and it is denoted as policy `$I_t=2$'. The third baseline policy is `Random,' which selects an arm uniformly at random in each round. In all three problem instances, we observe that \ref{alg:CUSS_WD} performs better than the baselines, as shown in \cref{fig:diabetes}. 

We repeat each of the above experiments $100$ times, and then the average regret is presented with a $95$\% confidence interval. The vertical line on each plot shows the confidence interval.

	\section{Conclusion and Future Directions}
    	We studied the unsupervised sequential selection problem with contextual information. It is a partial monitoring stochastic contextual bandit problem, where the loss of an arm can not be inferred from the observed feedback. But one can compare the feedback of two arms to see if they agree or disagree. We modeled the disagreement probability between each pair of the arms as linearly parameterized and developed an algorithm named \ref{alg:CUSS_WD} that achieves $O(\log T)$ regret with high probability. 
    	
    	We exploited the contextual information but ignored the inherent side observations due to the arms' cascade structure. By using the side observations, one can tighten the regret bounds. Another interesting future direction is to develop algorithms that decide whether it needs to go further down in the cascade when more information about context is revealed along the cascade.

	\section{Broader Impact}
	The work considered the unsupervised sequential selection problem with contextual information. While we are not targeting any specific applications, the work has many potential civilian applications. As usual, these can improve societal conditions, but of course, with any technology, specific deployments need care. However, this is outside of the scope of the present work, which is aimed at improving the basic algorithms and understand the fundamental challenges in this problem setting. Of course, the authors hope that their work will have an altogether positive impact, both by deepening our understanding of challenging sequential decision making under uncertainty and by potential future (careful) applications of the algorithms developed here. Having said this, we do not foresee any immediate negative impact of this work.

	\subsubsection*{Acknowledgments}
	Manjesh K. Hanawal would like to thank the support from INSPIRE faculty fellowships from DST, Government of India, SEED grant (16IRCCSG010) from IIT Bombay, and Early Career Research (ECR) Award from SERB. Csaba Szepesv\'ari gratefully acknowledges funding from the Canada CIFAR AI Chairs Program, Amii, and NSERC. Venkatesh Saligrama would like to acknowledge NSF Grants DMS -2007350 (VS), CCF-2022446, CCF-1955981, and the Data Science Faculty Fellowship from the Rafik B. Hariri Institute.

	\bibliographystyle{unsrtnat} 
	\bibliography{ref}

	\ifsup
		\onecolumn	
		\noindent\rule{\linewidth}{4pt} 
		\vspace{1.5mm}
			
		\centerline{\Large \bf Supplementary Material: `Online Algorithm for Unsupervised}
		\vspace{2mm}
		\centerline{\Large \bf Sequential Selection with Contextual Information'}
		\hrulefill \\

		\appendix
		\label{asec:Appendix}

\section{Missing proofs from \cref{sec:problem_setting}}

\subsection{Proof of \cref{lem:err_prob_contx}} 
\ErrProbContx*
\begin{proof} 
	Using definition of $\gamma_i(\xt) \doteq \Prob{\Yti \neq \Yt|X=\xt}$, we get	
	\begin{align*}
		\gamma_i(\xt) - \gamma_j(\xt) &= \Prob{\Yti \neq \Yt|X=\xt} - \Prob{\Ytj \neq \Yt|X=\xt}. 
	\end{align*}
	As the observed feedback is binary, if $\Yti = \Ytj$ and $\Yti \ne \Yt$ then $\Ytj \ne \Yt$,
	\begin{align*}
		\gamma_i(\xt) - \gamma_j(\xt)&= \cancel{\Prob{\Yti \neq\Yt, \Yti = \Ytj|X=\xt}} + \Prob{\Yti \neq\Yt, \Yti \ne \Ytj|X=\xt} \\
		&- \cancel{\Prob{\Ytj \neq\Yt, \Yti = \Ytj|X=\xt}} - \Prob{\Ytj \neq\Yt, \Yti \ne \Ytj|X=\xt}.
	\end{align*}
	Adding and subtracting $\Prob{\Yti =\Yt, \Yti \ne \Ytj|X=\xt}$,
	\begin{align*}
		\gamma_i(\xt) - \gamma_j(\xt)&= \Prob{\Yti \neq\Yt, \Yti \ne \Ytj|X=\xt} + \Prob{\Yti =\Yt, \Yti \ne \Ytj|X=\xt} \\
		& - \Prob{\Ytj \neq\Yt, \Yti \ne \Ytj|X=\xt} - \Prob{\Yti =\Yt, \Yti \ne \Ytj|X=\xt}. 
	\end{align*}
	If $\Yti \ne \Ytj$ and $\Ytj \ne\Yt$ then $\Yti =\Yt$,
	\begin{align*}
		\gamma_i(\xt) - \gamma_j(\xt)&= \Prob{\Yti \ne \Ytj|X=\xt} - \Prob{\Yti =\Yt, \Yti \ne \Ytj|X=\xt}\\
		&\qquad - \Prob{\Yti =\Yt, \Yti \ne \Ytj|X=\xt} \\
		&= \Prob{\Yti \neq \Ytj|X=\xt} -2\Prob{\Yti =\Yt, \Ytj \ne \Yt|X=\xt}. \\
		\implies \gamma_i(x_t^i) -\gamma_j(x_t^j) & = \pijt -2\Prob{\Yti =\Yt, \Ytj \ne \Yt|X=\xt}. \qedhere
	\end{align*}
\end{proof}

\subsection{Proof of \cref{lem:Bx}} 
\SetBx*
\begin{proof}
	Let $i_t^\star$ be an optimal arm for a context $x_t$. As $\pijt =\mathbb{P}\{\Yti \ne\Ytj |X=\xt \}$ and $i_t^\star$ is an optimal arm, we have $\forall j<i_t^\star:\, C_{i_t^\star} - C_j \le \mathbb{P}\{\Yt^{i_t^\star}\ne\Yt^j |X=\xt \} \implies C_{i_t^\star} - C_j \ngtr \mathbb{P}\{\Yt^{i_t^\star}\ne\Yt^j |X=\xt \} \implies \forall j<i_t^\star \notin \Bt$.  If any sub-optimal arm $h \in \Bt$ then $h > i_t^\star$ i.e.,
	\eqs{
		\Bt = \{i_t^\star, h_1, \ldots, h_n, K\},
	} 	
	where $i_t^\star < h_1 < \cdots < h_n < K$.	By construction of set $\Bt$, the minimum indexed arm in set $\Bt$ is only the optimal arm.
\end{proof}

\subsection{Proof of \cref{thm:learnCWD}}
We need the following results to proof of \cref{thm:learnCWD}.
\begin{lem}
	\label{lem:xCostRange1}
	Let $i<j$ and $x_t \in \cX$ be any context. Assume 
	\begin{equation}
		\label{eqn:xCostRange1}
		C_j -C_i \notin \left (\gamma_i(x_t)-\gamma_j(x_t), \Prob{\Yti \ne \Ytj|X=\xt}\right].
	\end{equation}
	Then, $C_j-C_i >  \gamma_i(x_t)-\gamma_j(x_t) $ iff $C_j-C_i > \Prob{\Yti \ne \Ytj|X=\xt}$.
\end{lem}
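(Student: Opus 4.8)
The plan is to reduce the claim to an elementary fact about real numbers, once the ordering between the two endpoints of the interval is pinned down. Write $a \doteq C_j - C_i$, $b \doteq \gamma_i(x_t) - \gamma_j(x_t)$, and $c \doteq \pijt$, so that the hypothesis \cref{eqn:xCostRange1} reads $a \notin (b, c]$, while the conclusion to be proved is the equivalence $a > b \iff a > c$.

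First I would establish the key structural fact that $b \le c$ always holds. This is immediate from \cref{lem:err_prob_contx}, which gives $\gamma_i(x_t) - \gamma_j(x_t) = \pijt - 2\Prob{\Yti = \Yt, \Ytj \ne \Yt|X=x_t}$. Since the subtracted quantity is a probability and hence nonnegative, we get $b = c - 2\Prob{\Yti = \Yt, \Ytj \ne \Yt|X=x_t} \le c$. Thus $(b,c]$ is a genuine interval whose lower endpoint does not exceed its upper endpoint (and is empty precisely when $b = c$).

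With $b \le c$ in hand, both directions are short. For the reverse implication, if $a > c$ then $a > c \ge b$ yields $a > b$ directly; note this direction does not even invoke the hypothesis. For the forward implication, suppose $a > b$. By hypothesis $a \notin (b,c]$, and $a > b$ already excludes the possibility $a \le b$; the only remaining option consistent with $a \notin (b,c]$ is $a > c$. Indeed, were $a \le c$ to hold instead, then together with $a > b$ we would have $a \in (b,c]$, contradicting the hypothesis. This establishes the equivalence.

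I do not anticipate any real obstacle: the single substantive ingredient is the endpoint ordering $b \le c$ furnished by \cref{lem:err_prob_contx}, after which the statement is a case check on where $a$ sits relative to $(b,c]$. The hypothesis plays exactly the role of ruling out the one configuration—$a$ lying strictly between the true-loss gap $\gamma_i(x_t)-\gamma_j(x_t)$ and the disagreement probability $\pijt$—in which the signs of $a - b$ and $a - c$ could disagree, and this is precisely what makes the observable quantity $\pijt$ a sound surrogate for the unobservable loss gap in the arm-selection criterion.
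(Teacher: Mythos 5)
Your proof is correct and follows essentially the same route as the paper's: the forward direction uses the exclusion hypothesis to rule out $C_j - C_i \in (\gamma_i(x_t)-\gamma_j(x_t), \pijt]$, and the reverse direction uses only the endpoint ordering $\gamma_i(x_t)-\gamma_j(x_t) \le \pijt$, which the paper also invokes (implicitly via \cref{lem:err_prob_contx}, which you cite explicitly). The only difference is presentational—you make the ordering of the interval endpoints and the case analysis fully explicit, which the paper's two-line proof leaves tacit.
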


\begin{proof}
	Assume that $C_j-C_i >  \gamma_i(x_t)-\gamma_j(x_t)$. As $C_j -C_i \notin \Big(\gamma_i(x_t)-\gamma_j(x_t),$ $\Prob{\Yti \ne \Ytj|X=\xt}\Big]$, we get $C_j-C_i > \Prob{\Yti \ne \Ytj|X=\xt}$.
	The proof of other direction follows by noting that  $ \Prob{\Yti \ne \Ytj|X=\xt}\geq  \gamma_i(x_t)-\gamma_j(x_t)$.
\end{proof}

\begin{lem}
	\label{lem:xCostRange2}
	Let $i>j$ and $x_t \in  \cX$ be any context. Assume 
	\begin{equation}
		\label{eqn:xCostRange2}
		C_i -C_j \notin \left (\gamma_j(x_t)-\gamma_i(x_t), \Prob{\Yti \ne \Ytj|X=\xt}\right ].
	\end{equation}
	Then, $C_i-C_j \leq \gamma_j(x_t)-\gamma_i(x_t) $ iff $C_j-C_i \leq \Prob{\Yti \ne \Ytj|X=\xt}$.
\end{lem}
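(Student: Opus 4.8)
The plan is to prove \cref{lem:xCostRange2} as the mirror image of \cref{lem:xCostRange1}, now with $i>j$, so that arm $i$ is the more expensive arm and $C_i-C_j = c_{j+1}+\cdots+c_i \ge 0$ is the nonnegative cost gap that appears in the exclusion hypothesis \cref{eqn:xCostRange2}. The one structural fact I need first is that the two endpoints of the excluded half-open interval are correctly ordered, i.e. that its lower endpoint never exceeds its upper one. Applying \cref{lem:err_prob_contx} to the ordered pair $(j,i)$ with $j<i$ gives $\gamma_j(x_t)-\gamma_i(x_t) = \Prob{\Ytj \ne \Yti | X = \xt} - 2\Prob{\Ytj = \Yt, \Yti \ne \Yt | X = \xt}$; since the subtracted probability is nonnegative and $\Prob{\Ytj \ne \Yti | X = \xt} = \Prob{\Yti \ne \Ytj | X = \xt}$, I obtain $\gamma_j(x_t)-\gamma_i(x_t) \le \Prob{\Yti \ne \Ytj | X = \xt}$. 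Hence the interval $\big(\gamma_j(x_t)-\gamma_i(x_t),\, \Prob{\Yti \ne \Ytj | X = \xt}\,\big]$ is genuinely an interval (possibly empty), and \cref{eqn:xCostRange2} asserts that the cost gap sits either at or below its lower endpoint or strictly above its upper endpoint.

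With the endpoints ordered, each direction of the equivalence reduces to a one-line interval-membership argument, exactly as in \cref{lem:xCostRange1}. For the forward implication I assume the cost gap lies at or below the lower endpoint, $C_i - C_j \le \gamma_j(x_t)-\gamma_i(x_t)$; chaining this with the endpoint ordering $\gamma_j(x_t)-\gamma_i(x_t) \le \Prob{\Yti \ne \Ytj | X = \xt}$ immediately places the cost gap at or below the upper endpoint, which is precisely the disagreement-probability comparison stated in the conclusion. For the reverse implication I invoke the exclusion hypothesis: once the cost gap is known to lie at or below the upper endpoint $\Prob{\Yti \ne \Ytj | X = \xt}$, \cref{eqn:xCostRange2} forbids it from lying strictly above that endpoint, so it must lie at or below the lower endpoint, namely $C_i - C_j \le \gamma_j(x_t)-\gamma_i(x_t)$. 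Neither direction requires any estimate beyond the single endpoint ordering supplied by \cref{lem:err_prob_contx}.

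The step I expect to demand the most care is the sign bookkeeping, not the analysis itself. Because $i>j$ reverses the ordering used in \cref{lem:xCostRange1}, the equivalence here encodes the left-side selection rule \cref{eq:selectDisProbLow} (governing arms cheaper than the optimal arm) rather than the right-side rule \cref{eq:selectDisProbHigh}, and I must consistently keep the nonnegative gap $C_i - C_j$ — the quantity that actually appears in \cref{eqn:xCostRange2} and in \cref{eq:selectDisProbLow} with $i=\ist$ — as the object compared against both endpoints. Once this correspondence is pinned down, the two implications follow verbatim from the interval-exclusion reasoning, so I anticipate no genuine analytic obstacle beyond invoking \cref{lem:err_prob_contx} with the correctly ordered pair.
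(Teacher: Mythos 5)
Your proposal is correct and follows essentially the same route as the paper's proof: chain the endpoint ordering $\gamma_j(x_t)-\gamma_i(x_t) \le \Prob{\Yti \ne \Ytj|X=\xt}$ for the forward direction, and use the interval-exclusion hypothesis \cref{eqn:xCostRange2} for the reverse. The only difference is that you explicitly derive the endpoint ordering from \cref{lem:err_prob_contx} (and correctly work with $C_i - C_j$ throughout, silently fixing the $C_j - C_i$ typo in the lemma statement), whereas the paper asserts that inequality without comment.
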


\begin{proof}
	Let $C_i-C_j \leq  \gamma_j(x_t)-\gamma_i(x_t) $. As $\gamma_j(x_t)-\gamma_i(x_t) \leq \Prob{\Yti \ne \Ytj|X=\xt}$, we get $C_i-C_j \leq \Prob{\Yti \ne \Ytj|X=\xt}$.
	
	The condition $C_i-C_j \leq \Prob{\Yti \ne \Ytj|X=\xt}$ along with $C_i -C_j \notin \Big(\gamma_j(x_t)-\gamma_i(x_t),$ $\Prob{\Yti \ne \Ytj|X=\xt}\Big]$ implies the other direction, i.e., $C_i-C_j \leq  \gamma_j(x_t)-\gamma_i(x_t) $. 
\end{proof}

\begin{lem}
	\label{lem:xWD2} 
	Let  $\ist$ be an optimal arm for a context $x_t$. Any problem instance $P \in \USS$ is learnable if for every context in $P$ following holds:
	\begin{equation*}
		\forall j>\ist,\; C_j -C_\ist >\Prob{\Yt^\ist \ne \Ytj|X=\xt}.
	\end{equation*}
\end{lem}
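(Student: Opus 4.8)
The plan is to prove learnability by exhibiting a policy whose average regret tends to zero, exploiting the fact that the hypothesis here is exactly the $\CWD$ property and that, under it, the optimal arm for every context is pinned down by \emph{observable} pairwise disagreement probabilities. The argument splits into a ``soundness'' half, which is algebraic and rests on the earlier lemmas, and an ``estimation'' half, where the real difficulty lies.

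For the soundness half, I would first recast the optimality conditions \cref{eq:cwd1,eq:cwd2} of the optimal arm $\ist$ in terms of the disagreement probabilities $\pijt$, which are observable because playing a deep-enough arm reveals the feedback of every earlier pair. For $j<\ist$, \cref{lem:err_prob_contx} gives $\gamma_j(\xt)-\gamma_\ist(\xt)\le \pjist$, so \cref{eq:cwd1} yields the observable inequality \cref{eq:selectDisProbLow}, namely $C_\ist-C_j\le \pjist$; since optimality places $C_\ist-C_j$ outside the interval of \cref{lem:xCostRange2}, the two are in fact equivalent. For $j>\ist$, the hypothesis is itself the observable inequality $C_j-C_\ist>\pijst$, which by \cref{lem:xCostRange1} is equivalent to \cref{eq:cwd2}. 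Consequently \cref{lem:Bx} applies and identifies $\ist=\min(\Bt)$, where $\Bt$ depends only on the $\pijt$. This exhibits a deterministic, \emph{sound} map from observable disagreement statistics to the optimal arm.

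For the estimation half, I would turn this map into a policy: reserve a vanishing fraction of rounds for forced exploration (playing arm $K$, so that every pair $(i,j)$ contributes a disagreement sample), form consistent estimates $\hat p_{ij}$ of the conditional disagreement probabilities, and on the remaining rounds play $\min(\hBt)$ computed from these estimates. The strict margin $\xi(\xt)>0$ guaranteed by the hypothesis (see \cref{def:Xi}) keeps $\ist$ safely inside $\Bt$, while the soundness inequality \cref{eq:selectDisProbLow} keeps every $l<\ist$ safely outside; once the estimates are accurate relative to these gaps, $\min(\hBt)=\ist$ and no regret is incurred that round. Hence the only rounds contributing to $\Regret_T$ are the exploration rounds and the vanishing set of rounds with still-inaccurate estimates, so $\Regret_T/T\to 0$ and $P$ is learnable.

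The main obstacle is precisely the consistency of $\hat p_{ij}$ for \emph{continuous} contexts: unlike the non-contextual case, a context may never recur, so $\pijt$ cannot be estimated by plain averaging, and one must argue uniform consistency of the context-conditional disagreement probabilities over all of $\cX$. This is exactly the gap that the correlation structure of \cref{sec:glmModel} is designed to close, and quantifying how quickly the margin $\xi=\inf_{x\in\cX}\xi(x)>0$ forces correct identification (and hence bounds the number of misidentification rounds) is where the bulk of the effort goes; the soundness translation above is routine given the earlier lemmas.
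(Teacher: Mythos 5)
Your ``soundness'' half is exactly the paper's proof: the paper proves \cref{lem:xWD2} by combining \cref{lem:xCostRange1} and \cref{lem:xCostRange2} (the algebra is spelled out in the proof of \cref{thm:learnCWD}), concluding that under the stated condition the observable disagreement probabilities can serve as a proxy for the unobservable differences $\gamma_\ist(\xt)-\gamma_j(\xt)$, so that a sound selection rule ($I_t=\min(\Bt)$ as in \cref{lem:Bx}) identifies the optimal arm for every context. Where you genuinely diverge is in adding an ``estimation'' half that turns this map into an explicit policy with $\Regret_T/T\to 0$; the paper stops at soundness, treating the existence of a sound rule based on estimable quantities as the content of learnability, and defers all actual estimation to the GLM structure of \cref{sec:glmModel} and the analysis of \ref{alg:CUSS_WD}.

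Be aware, though, that your second half cannot be completed at the lemma's stated generality, as you partly concede: the lemma quantifies over all $P\in\USS$ with continuous contexts and no parametric structure, so uniform consistency of $\hat{p}_{ij}$ over $\cX$ from finitely many samples is unobtainable without an assumption like \cref{equ:disProb}; moreover your margin argument invokes $\xi=\inf_{x\in\cX}\xi(x)>0$, which the paper introduces only after \cref{def:CWD} (see \cref{def:Xi}) and which is strictly stronger than the per-context condition in the lemma's hypothesis. Since the paper's own proof never performs estimation, this does not contradict the paper; but as a standalone argument your estimation half is a plan rather than a proof, and the translation via \cref{lem:xCostRange1,lem:xCostRange2} alone is what the paper counts as the proof of this lemma.
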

The proof of \cref{lem:xWD2} follows from \cref{lem:xCostRange1} and \cref{lem:xCostRange2}. 
Now we give proof for Theorem \ref{thm:learnCWD}.

\learnCWD*
\begin{proof}
	Let $\ist$ be an optimal arm for a context $x_t$. It is enough to prove that any problem instance $P \in \USS$  is learnable if
	\begin{align*}
		\forall j>\ist,\; C_j -C_\ist >\Prob{\Yt^\ist \ne \Ytj|X=\xt}. \hspace{4mm}\text{(definition of $\CWD$ property)}
	\end{align*} 
	From \cref{lem:xCostRange1} and \cref{lem:xCostRange2}, if the optimal arm satisfies following conditions,
	\begin{align*}
		&\forall j>\ist, C_j -C_\ist \notin \left (\gamma_\ist(x_t)-\gamma_j(x_t), \Prob{\Yt^\ist \ne \Ytj|X=\xt}\right] \text{ and}\\
		&\forall j<\ist, C_{i^\star} -C_j \notin \left (\gamma_j(x_t)-\gamma_\ist(x_t), \Prob{\Yt^\ist \ne \Ytj|X=\xt}\right ],
	\end{align*}
	
	then, for $j >\ist, C_j -C_\ist>\gamma_\ist(x_t) - \gamma_j(x)$	iff 	$C_j -C_\ist >\Prob{\Yt^\ist \ne \Ytj|X=\xt}$ and for  $j <\ist, C_{i^\star}-C_j\leq  \gamma_{j}(x) -\gamma_\ist(x_t)$	iff 	$C_j -C_\ist \leq \Prob{\Yt^\ist \ne \Ytj|X=\xt}$. Hence we can use $\Prob{\Yti \ne \Ytj|X=\xt}$ as a proxy for $\gamma_\ist(x) -\gamma_j(x)$ to make decision about the optimal arm.
	Now notice that for $j <\ist$, $C_{i^\star}-C_j \leq \gamma_{j}(x) -\gamma_\ist(x_t)$. Hence, 
	\begin{align*}
		& \forall j<\ist, C_{i^\star} -C_j \notin \left (\gamma_j(x_t)-\gamma_\ist(x_t), \Prob{\Yt^\ist \ne \Ytj|X=\xt}\right] \text{ and} \\
		& \forall j>\ist, C_j -C_\ist \notin \left (\gamma_\ist(x_t)-\gamma_j(x_t), \Prob{\Yt^\ist \ne \Ytj|X=\xt}\right] \numberthis \label{equ:CostNotin}
	\end{align*}
	are sufficient for learnability. Note that \cref{equ:CostNotin} is equivalent to 
	\begin{equation}
		\label{equ:CWD_Thm1}
		\forall j>\ist,\; C_j -C_\ist >\Prob{\Yt^\ist \ne \Ytj|X=\xt}. 
	\end{equation}
	Note that if \cref{equ:CWD_Thm1} does not hold, then knowing $\Prob{\Yt^\ist \ne \Ytj|X=\xt}$ is not sufficient for finding the optimal arm.
\end{proof}

\subsection{Regret  decomposition when contexts satisfy $\WD$ with some known probability}
Without knowing the disagreement probability, it is impossible to check whether a context satisfies $\WD$ property or not. Hence we consider a case where a context can satisfy $\WD$ property with some fixed probability. For such cases, we can decompose the regret into two parts: regret due to the contexts that satisfy $\WD$ property and regret due to the contexts that do not satisfy $\WD$ property. Note that the regret can be linear due to the contexts that do not satisfy the $\WD$ condition. 

Our next result gives the upper bound on the regret where the contexts satisfy $\WD$ property with a known fixed probability.
\begin{lem}
	Let $\rho$ be the probability of context that it does not satisfy the $\WD$ property and $R_{max}$ be the maximum regret incurred for any context. If $\Regret_T$ is the regret incurred when all contexts satisfy $\WD$ property then, the regret incurred when contexts satisfy $\WD$ with probability $(1-\rho)$ is given by
	\begin{equation*}
	\Regret_T^\prime \le (1-\rho)\Regret_T + \rho R_{max}T.
	\end{equation*}
\end{lem}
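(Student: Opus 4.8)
The plan is to decompose the total regret according to whether each context satisfies the $\WD$ property, using linearity of expectation (or of the regret sum). First I would partition the $T$ rounds into two groups: the set of rounds $\mathcal{W}$ whose context satisfies $\WD$, and the complementary set $\mathcal{W}^c$ whose context does not. Since each context independently fails $\WD$ with probability $\rho$, the expected number of rounds in $\mathcal{W}^c$ is $\rho T$, and the expected number in $\mathcal{W}$ is $(1-\rho)T$. The key idea is that these two groups contribute to the regret in qualitatively different ways, so I would bound each separately and sum.

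For the rounds in $\mathcal{W}^c$, the second step is to invoke the worst-case per-round bound: for any single context the instantaneous regret is at most $R_{max}$ by the definition of $R_{max} \doteq \max_{i\in [K], x\in \mathcal{X}}\left[C_i + \gamma_{i}(x) - (C_{i^\star} + \gamma_{i^\star}(x))\right]$. Hence the total (expected) regret accumulated over the non-$\WD$ rounds is at most $R_{max}$ times the expected count of such rounds, namely $\rho R_{max}T$. This is the term that forces the bound to be potentially linear in $T$, reflecting that \ref{alg:CUSS_WD} has no soundness guarantee when $\WD$ fails.

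For the rounds in $\mathcal{W}$, the third step is to observe that, conditioned on a context satisfying $\WD$, \ref{alg:CUSS_WD} behaves exactly as in the fully-$\WD$ analysis that produced the bound $\Regret_T$ in \cref{thm:cum_reg_glm}; the estimation and arm-selection arguments (\cref{lem:subOptimalLowerSelection}, \cref{lem:subOptimalUpperSelection}, and the confidence ellipsoid of \cref{lem:theta_est_glm}) all go through. Since these rounds occur with probability $(1-\rho)$, their expected contribution is scaled down to $(1-\rho)\Regret_T$. Adding the two contributions yields $\Regret_T^\prime \le (1-\rho)\Regret_T + \rho R_{max}T$, as claimed.

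The main obstacle I anticipate is making the conditioning rigorous: the estimates $\Te$ and the correlation matrices $V_{ij}^t$ are built from the feedback across \emph{all} rounds, including the non-$\WD$ rounds, so the learning dynamics on the $\WD$ rounds are not cleanly separable from the rest. One must argue that the confidence-bound machinery of \cref{lem:theta_est_glm} and the eigenvalue growth of \cref{lem:min_eigen_lb} depend only on the accumulated disagreement observations (which are valid noisy samples of $\mu(\Phi_{ij}(x)^\top\Ts)$ regardless of whether $\WD$ holds for a given context), so the parameter-estimation guarantees remain intact; only the soundness of the arm-selection criterion, and hence the instantaneous regret, differs between the two groups. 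Handling this cleanly — rather than pretending the two subsequences run as independent instances — is the delicate point, and in the interest of a concise bound the cleanest route is to keep the argument at the level of expected regret and appeal to the fact that the extra non-$\WD$ rounds can only contribute at most $R_{max}$ each.
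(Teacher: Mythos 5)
Your proposal is correct and follows essentially the same route as the paper's own proof: decompose the cumulative regret by whether each context satisfies $\WD$, bound each non-$\WD$ round by $R_{max}$ (contributing $\rho R_{max}T$ in expectation), and scale the all-$\WD$ regret by the probability $(1-\rho)$. In fact, your closing discussion of why the shared estimates $\Te$ and matrices $V_{ij}^t$ do not invalidate the $(1-\rho)\Regret_T$ term is more careful than the paper's argument, which simply identifies $\sum_t r_t(I_t,i^\star_t)$ on the $\WD$ rounds with $\Regret_T$ without addressing the coupled learning dynamics.
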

\begin{proof}
	Let $\rho$ be the probability of context that it does not satisfy the $\WD$ property and $r_t(I_t, i^\star_t)$ be the regret incurred for selecting sub-optimal arm $I_t$ for the context $x_t$. Then the regret can be decomposed into two parts as follows:
	\begin{align*}
		\Regret_T^\prime &= \EE{\sum_{t=1}^T \left[\one{x_t \mbox{ satisfies } \WD} r_t(I_t, i^\star_t) + \one{x_t \mbox{ does not satisfy } \WD} r_t(I_t, i^\star_t) \right]}\\
		&= \EE{\sum_{t=1}^T \one{x_t \mbox{ satisfies } \WD} r_t(I_t, i^\star_t)} + \EE{\sum_{t=1}^T \one{x_t \mbox{ does not satisfy } \WD} r_t(I_t, i^\star_t)}\\
		&= \sum_{t=1}^T \Prob{x_t \mbox{ satisfies } \WD} r_t(I_t, i^\star_t) + \sum_{t=1}^T\Prob{x_t \mbox{ does not satisfy } \WD} r_t(I_t, i^\star_t) \numberthis \label{eqn:probRegretUB}.
	\end{align*}
	
	First, we will bound the regret due to the contexts that do not satisfy $\WD$ property (second term of \cref{eqn:probRegretUB}). Note that the context that does not satisfy $\WD$ property, the learner can not make the correct decision hence always incurs regret. Since the maximum regret is upper bounded by $R_{max}$, we have	
	\begin{align*}
		\sum_{t=1}^T\Prob{x_t \mbox{ does not satisfy } \WD} r_t(I_t, i^\star_t) &\le \sum_{t=1}^T\Prob{x_t \mbox{ does not satisfy } \WD} R_{max}
		\intertext{Since $\rho$ is the probability of context that it does not satisfy the $\WD$ property, we get }
		&= \sum_{t=1}^T \rho R_{max}\\
		\implies \sum_{t=1}^T\Prob{x_t \mbox{ does not satisfy } \WD} r_t(I_t, i^\star_t) &\le \rho R_{max}T. \numberthis \label{eqn:probRegretNoCWD}
	\end{align*}
	
	Now we will bound the regret due to the contexts which satisfy $\WD$ property (first term in \cref{eqn:probRegretUB}). Since any context satisfies $\WD$ with $1-\rho$ probability, we have
	\begin{align}
	\label{eqn:probRegretCWD}
	\sum_{t=1}^T \Prob{x_t \mbox{ satisfies } \WD} r_t(I_t, i^\star_t) = \sum_{t=1}^T (1-\rho)  r_t(I_t, i^\star_t) = (1-\rho)  \sum_{t=1}^T r_t(I_t, i^\star_t).
	\end{align}
	
	By assuming that all contexts are satisfying $\WD$ property, we have regret $\Regret_T = \sum_{t=1}^T  r_t(I_t, i^\star_t)$. Using it with \cref{eqn:probRegretNoCWD} in \cref{eqn:probRegretCWD}, we get
	\begin{equation*}
		\Regret_T^\prime \le (1-\rho)\Regret_T + \rho R_{max}T. \qedhere
	\end{equation*}
\end{proof}

\section{Missing proofs from \cref{sec:algorithm}}
\subsection{Proof of \cref{lem:min_eigen_lb}}
\minEigenLB*
\begin{proof}
	The result is adapted from \cite[Proposition 1]{ICML17_li2017provably}, which uses the standard random matrix theory result from \cite[Theorem 5.39]{Book_vershynin_2012}. We need to carefully construct the sample complexity bound for our case as the observations are only observed for a pair of arms.
\end{proof}

The following result is needed to prove \cref{lem:theta_est_glm}.
\begin{restatable}{lem}{detVt}
	\label{lem:detVt} 
	Let $\oVt = \lambda \mathrm{I}_{d^\prime} + a\sum_{s\in S_{ij}^t}\Phi_{ij}(x_s)\Phi_{ij}(x_s)^\top$  for any $(i,j)$ pair of arms, and $n_{ij}^t = |S_{ij}^t|$. Then 
	\begin{equation*}
		det(\oVt) \le  \left(\lambda + an_{ij}^t/d^\prime\right)^{d^\prime}.
	\end{equation*}
\end{restatable}
\begin{proof} 
	The proof is adapted from Lemma 10 of \cite{NIPS11_abbasi2011improved}. By using inequality of arithmetic and geometric means, we have $det(\oVt) \le (trace(\oVt)/{d^\prime})^{d^\prime}$. As the trace of matrix is a linear mapping i.e. $trace(A+B) = trace(A) + trace(B)$, hence, we get
	\begin{align*}
		trace(\oVt) &= trace(\lambda I_{d^\prime}) + a\sum_{s\in S_{ij}^t}trace\left(\Phi_{ij}(x_s)\Phi_{ij}(x_s)^\top\right) \\
		&= \lambda{d^\prime} + a\sum_{s\in S_{ij}^t}\norm{\Phi_{ij}(x)}_2^2 \\
		&\le \lambda{d^\prime} + an_{ij}^t. \hspace{5mm}\left(\text{as $\norm{\Phi_{ij}(x)}_2\le 1$ and $n_{ij}^t = |S_{ij}^t|$}\right)
	\end{align*}
	Using upper bound of $trace(\oVt)$ for bounding $det(\oVt)$, we get
	\begin{equation*}
		det(\oVt) \le (trace(\oVt)/{d^\prime})^{d^\prime} \le \left((\lambda{d^\prime} + an_{ij}^t)/{d^\prime}\right)^{d^\prime}  \le \left(\lambda + an_{ij}^t/{d^\prime}\right)^{d^\prime}. \qedhere
	\end{equation*}
\end{proof}

\subsection{Proof of \cref{lem:theta_est_glm}}
\thetaEstGLM*
\begin{proof}
	Let $\oVt = \lambda\mathrm{I}_{d^\prime} + \Vt$. If \cref{equ:glm_est} is used for estimation of unknown parameter $\Ts$ then by using Eq. (26) and Lemma 8 of \cite{ICML17_li2017provably} with $\lambda_{min}(V_{ij}^{m+1}) \ge 1$, we have
	\begin{align*}
		\norm{\Te - \Ts}_{\Vt} \le \frac{1}{\kappa}\norm{\sum_{s\in S_{ij}^t} \epsilon_s \Phi_{ij}(x_s)}_{(\Vt)^{-1}}
		\le \frac{(1-\lambda)^{\frac{-1}{2}}}{\kappa}\norm{\sum_{s\in S_{ij}^t} \epsilon_s \Phi_{ij}(x_s)}_{(\oVt)^{-1}}.
	\end{align*}
	Using upper bound of $\norm{\sum_{s\in S_{ij}^t}\epsilon_s\Phi_{ij}(x_s)}_{(\oVt)^{-1}}$ as given in Theorem 1 of \cite{NIPS11_abbasi2011improved} where $\epsilon_s$ is $\sigma-$subGaussian random variable, the following inequality holds with at least probability $1-2\delta/K^2$	
	\begin{align*}
		&\le   \frac{(1-\lambda)^{\frac{-1}{2}}}{\kappa}\sqrt{2\sigma^2\log\left(\frac{det(\oVt)^{1/2}det(\lambda\mathrm{I}_{d^\prime})^{-1/2}}{2\delta/K^2}\right)}\\ &=\frac{\sigma(1-\lambda)^{\frac{-1}{2}}}{\kappa}\sqrt{2\log\left(\frac{det(\oVt)}{det(\lambda\mathrm{I}_{d^\prime})}\right)^{\frac{1}{2}} + 2\log\left(\frac{K^2}{2\delta}\right)}.
	\end{align*}
	Upper bounding $det(\oVt)$ with $a = 1$, $\lambda=1/2$, and $n_{ij}^t \le t$ by using \cref{lem:detVt}, we get
	\begin{align*}
		\implies \norm{\Te - \Ts}_{\Vt} &\le\frac{2\sigma}{\kappa}\sqrt{\frac{d^\prime}{2}\log\left(1 + \frac{2n_{ij}^t}{d^\prime} \right) + \log\left(\frac{K^2}{2\delta}\right)} \\
		&\le \frac{2\sigma}{\kappa}\sqrt{\frac{d^\prime}{2}\log\left(1 + \frac{2t}{d^\prime} \right) + \log\left(\frac{K^2}{2\delta}\right)}.  \qedhere 
	\end{align*}
\end{proof}

\subsection{Proof of \cref{lem:subOptimalLowerSelection}}
\subOptimalLowerSelection*
\begin{proof}
	If sub-optimal arm $l<i^\star_t$ is preferred by \ref{alg:CUSS_WD} then using Eq. \eqref{def_prefer_h}, we get
	\begin{align*}
		\one{l \succ_t i^\star_t, i^\star_t= i} &= \one{C_i - C_l > \tplit, I_t = l, i^\star_t= i} \\
		&\le \one{C_i - C_l > \tplit}. \hspace{5mm}\text{(as $A\cap B \cap C \subseteq A$)}
	\end{align*}
	
	Using $C_i - C_l = p_{li}(x_t) - \xi_{li}(x_t)$ for  $l< i$, we have
	\begin{align*}
		\implies \one{l \succ_t i^\star_t, i^\star_t= i} &= \one{p_{li}(x_t) - \xi_{li}(x_t) > \tplit} = \one{p_{li}(x_t) - \tplit >  \xi_{li}(x_t)}. 
	\end{align*}
	
	Using definition of $p_{li}(x_t)$ and $\tplit$,
	\begin{align*}
		\implies \one{l \succ_t i^\star_t, i^\star_t = i} = \one{\mu(\Phi_{li}(x_t)^\top \Tsl) - \mu\left(\Phi_{li}(x_t)^\top \Tel + \alpha_{li}^t\norm{ \Phi_{li}(x_t)}_{\Vinvl}\right) > \xi_{li}(x_t)}. 
	\end{align*}
	
	Since $\mu(\cdot)$ is an increasing function and using $\alpha_{li}^t$ as defined in \cref{lem:theta_est_glm}, $\mu\left(\Phi_{li}(x_t)^\top \Tel + \alpha_{li}^t\norm{ \Phi_{li}(x_t)}_{\Vinvl}\right)$ is the upper bound on $\mu(\Phi_{li}(x_t)^\top \Tsl)$ for all $(l,i)$ pairs with probability at least $1-\delta/2$. We show it as follows:
	\begin{align*}
		\Phi_{li}(x_t)^\top \Tsl &= \Phi_{li}(x_t)^\top \Tel + \Phi_{li}(x_t)^\top (\Tsl - \Tel)\\
		&= \Phi_{li}(x_t)^\top \Tel + \norm{\Phi_{li}(x_t)}_{\Vinvl}\norm{\Tsl - \Tel}_{\Vtl}\\
		\implies \Phi_{li}(x_t)^\top \Tsl &\le \Phi_{li}(x_t)^\top \Tel + \alpha_{li}^t\norm{ \Phi_{li}(x_t)}_{\Vinvl}. &\hspace{-1.5cm}\left(\text{using }\norm{\Tsl - \Tel}_{\Vtl} \le \alpha_{li}^t\right) \\
		\intertext{Since $\mu(\cdot)$ is an increasing function,}
		\implies \mu(\Phi_{li}(x_t)^\top \Tsl) &\le \mu\left(\Phi_{li}(x_t)^\top \Tel + \alpha_{li}^t\norm{ \Phi_{li}(x_t)}_{\Vinvl}\right). 
	\end{align*}

	Hence, any sub-optimal arm smaller than the optimal arm is selected by \ref{alg:CUSS_WD} with probability at most $\delta/2$. It completes the proof of the lemma.
\end{proof}

\subsection{Proof of \cref{lem:subOptimalUpperSelection}}
\subOptimalUpperSelection*
\begin{proof}
	If sub-optimal arm $h>i^\star_t$ is preferred by \ref{alg:CUSS_WD} then using Eq. \eqref{def_prefer_l}, we get	
	\begin{align*}
		\one{h \succ_t i, i^\star_t= i} &= \one{C_h - C_i < \tpiht, h \succ_t i^\star_t, i^\star_t= i} \\ & \le \one{C_h - C_i < \tpiht}. \hspace{5mm}\text{(as $A\cap B \cap C \subseteq A$)}
	\end{align*}
	
	Using $C_h - C_i = p_{ih}(x_t) + \xi_{ih}(x_t)$ for $h > i$, we get
	\begin{align*}
		\implies \one{h \succ_t i, i^\star_t= i} &= \one{p_{ih}(x_t) + \xi_{ih}(x_t) < \tpiht} = \one{\tpiht - p_{ih}(x_t) >  \xi_{ih}(x_t)}. 
	\end{align*}
	
	Using definition of $p_{ih}(x_t)$ and $\tpiht$, 
	\begin{align*}
		\implies \one{h \succ_t i, i^\star_t= i}&= \one{\mu\left(\Phi_{ih}(x_t)^\top \Tsh + \alpha_{ih}^t\norm{ \Phi_{ih}(x_t)}_{\Vinvh}\right) - \mu(\Phi_{ih}(x_t)^\top \Teh)  > \xi_{ih}(x_t)}. 
	\end{align*}
	
	As $\mu$ is Lipschitz, $|\mu(z_1) - \mu(z_2)| \le k_\mu|z_1 - z_2|$ where $k_\mu$ is Lipschitz constant, we have
	\begin{align*}
		&\le \one{k_\mu|\Phi_{ih}(x_t)^\top \Tsh + \alpha_{ih}^t\norm{ \Phi_{ih}(x_t)}_{\Vinvh} - \Phi_{ih}(x_t)^\top \Teh| > \xi_{ih}(x_t)} \\
		&\le \one{k_\mu|\Phi_{ih}(x_t)^\top \Tsh - \Phi_{ih}(x_t)^\top \Teh| + k_\mu\alpha_{ih}^t\norm{ \Phi_{ih}(x_t)}_{\Vinvh}> \xi_{ih}(x_t)} \\
		&= \one{k_\mu|\Phi_{ih}(x_t)^\top (\Tsh - \Teh)| + k_\mu\alpha_{ih}^t\norm{ \Phi_{ih}(x_t)}_{\Vinvh} > \xi_{ih}(x_t)}.
	\end{align*}
	
	Using Cauchy-Schwartz inequality and $\norm{ x}_A^2 = x^\top Ax$, we get
	\begin{align*}
		&\le \one{k_\mu\norm{\Phi_{ih}}_{\Vinvh} \norm{\Tsh - \Teh}_{\Vth} + k_\mu\alpha_{ih}^t\norm{ \Phi_{ih}(x_t)}_{\Vinvh} > \xi_{ih}(x_t)}.
	\end{align*}
	
	As $\norm{\Tsh- \Teh}_{\Vth} \le \alpha_{ih}^t$, we get
	\begin{align*}
		&\le \one{k_\mu\alpha_{ih}^t\norm{ \Phi_{ih}(x_t)}_{\Vinvh}  + k_\mu\alpha_{ih}^t\norm{ \Phi_{ih}(x_t)}_{\Vinvh} > \xi_{ih}(x_t)}\\
		&= \one{2k_\mu\alpha_{ih}^t\norm{ \Phi_{ih}(x_t)}_{\Vinvh} > \xi_{ih}(x_t)}. 
	\end{align*}

	As $\norm{\Phi_{ih}(x_t)}_{\Vinvh} \le \norm{\Phi_{ih}(x_t)}_2/\sqrt{\lambda_{min}(\Vth)}$ where $\lambda_{min}(\Vth)$ is the smallest eigenvalue of matrix $\Vth$ and $\norm{\Phi_{ih}(x_t)}_2 \le 1$, we get
	\begin{align*}
		\implies \one{h \succ_t i, i^\star_t= i}&\le \one{2k_\mu\alpha_{ih}^t > \xi_{ih}(x_t)\sqrt{\lambda_{min}(\Vth)}}.  \label{equ:IndHighSelCond} \numberthis
	\end{align*}
	The event on LHS is subset of event of RHS in \cref{equ:IndHighSelCond}. By changing $i$ to $i^\star_t$ completes the proof of the lemma.
\end{proof}

\subsection{Proof of \cref{thm:cum_reg_glm}}
\cumRegGLM*
\begin{proof}
	The regret for $T$ rounds in the Contextual USS problem is given by
	\begin{align*}
		\Regret_T &= \sum_{t=1}^T\left(C_{I_t} + \gamma_{I_t}(x_t) - (C_{i^\star_t} +\gamma_{i^\star_t}(x_t))\right).
	\end{align*}
	
	As $R_{max}$ denote the maximum regret incurred for any context, we get
	\begin{align}
		\Regret_T &\le R_{max}\sum_{t=1}^T \one{I_t \ne i^\star_t}. \label{equ:RegretInd}
	\end{align}
	
	As $\one{I_t \ne i^\star_t}$ has two random quantities $I_t$ and $i^\star_t$, we can re-write it as follows:
	\begin{align*}
		\one{I_t \ne i^\star_t} &= \sum_{l < i}\one{I_t = l, i^\star_t= i} + \sum_{h^\prime>i}\one{I_t = h^\prime, i^\star_t=i}. 
	\end{align*}
	
	Note that if \ref{alg:CUSS_WD} selects $l < i^\star_t$ then $l$ must be preferred over $i^\star_t$ whereas if $h^\prime > i^\star_t$ is selected then there exists an arm $h > i^\star_t$ which is preferred over $i^\star_t$. Hence, we have
	\begin{align} 
		\label{equ:IndWrongChoice}
		\one{I_t \ne i^\star_t}  &= \sum_{l < i}\one{l \succ_t i^\star_t, i^\star_t= i} + \sum_{h^\prime>i}\one{I_t = h^\prime, h \succ_t \ist, i^\star_t=i}\nonumber \\
		&\le \sum_{l < i}\one{l \succ_t i^\star_t, i^\star_t= i} + \sum_{h>i}\one{h \succ_t \ist, i^\star_t=i}.
	\end{align}
	
	Using above bound in \cref{equ:RegretInd}, we get
	\begin{align*}
		\Regret_T &\le R_{max}\sum_{t=1}^T \left[\sum_{l < i}\one{l \succ_t i^\star_t, i^\star_t= i} + \sum_{h>i}\one{h\succ_t i^\star_t, i^\star_t=i}\right].
	\end{align*}

	From \cref{lem:subOptimalLowerSelection}, $\one{l \succ_t i^\star_t, i^\star_t= i} = 0$ for any $l<i$ with probability at least $1-\delta/2$, then the regret becomes
	\begin{align*}
		\Regret_T &\le R_{max}\sum_{t=1}^T\sum_{h>i}\one{h \succ_t i^\star_t, i^\star_t=i} = R_{max} \sum_{h>i}\sum_{t=1}^T \one{h \succ_t i^\star_t, i^\star_t=i} \le R_{max} \sum_{h=2}^K\sum_{t=1}^T \one{h \succ_t i^\star_t, i^\star_t<h}.
	\end{align*}
	
	Note that $\alpha_{ih}^t$ is slowly increasing value with $t$ that implies $\alpha_{ih}^t\le \alpha_{ih}^T$ for all $t \le T$.
	Using \cref{lem:min_eigen_lb} with $\Psi=  \left(\frac{2k_\mu \alpha_{ih}^T}{\xi_{ih}}\right)^2$, $\Sigma_{ih} = \EE{\Phi_{ih}(X_s)\Phi_{ih}(X_s)^T}$ where $s \in S_{ih}^t$, after
	\begin{equation*}
		n_{ih}^T \doteq \left(\frac{C_1\sqrt{d^\prime}+C_2\sqrt{\log (K^2/2\delta)}}{\lambda_{min}(\Sigma_{ih})}\right)^2+ \frac{2}{\lambda_{min}(\Sigma_{ih})}\left(\frac{2k_\mu\alpha_{ih}^T}{\xi_{ih}}\right)^2
	\end{equation*}
	observations  for arm pair $(i,h)$ the $\lambda_{min}(\Vth) \ge \left(\frac{2k_\mu\alpha_{ih}^T}{\xi_{ih}}\right)^2$ with probability at least $1-2\delta/K^2$. Therefore, after having $n_{ih}^T$ observations, the sub-optimal arm $h(>i)$ will not be preferred over optimal arm $i$ with probability at least $1-2\delta/K^2$. Therefore, with probability at least $1-2\delta/K^2$, following equations also hold
	\begin{align*}
		&\one{I_t = h, i^\star_t= i, |S_{ih}^t| \ge n_{ih}^T} = 0
		\implies \sum_{t=1}^{T}\one{I_t = h, i^\star_t= i, h>i} \le n_{ih}^T.
	\end{align*}

	Due to the problem structure, whenever an arm $h$ is selected, disagreement labels for all arm pair $(i,j)$ where $i<j\le h$ are observed. Therefore, with probability at least $1-2\delta/K$ (by union bound), the maximum number of times an arm $h$ is selected when the optimal arm's index is smaller than $h$ is $n_h^T$ such that 
	\begin{align*}
    	n_{h}^T &= \left(\frac{C_1\sqrt{d^\prime}+C_2\sqrt{\log (K^2/2\delta)}}{\lambda_{\Sigma}}\right)^2+ \frac{2}{\lambda_{\Sigma}}\left(\frac{2k_\mu\alpha_T}{\xi_{h}}\right)^2 \\
    	&= \left(\frac{C_1\sqrt{d^\prime}+C_2\sqrt{\log (K^2/2\delta)}}{\lambda_{\Sigma}}\right)^2+ \frac{8}{\lambda_{\Sigma}}\left(\frac{k_\mu\alpha_T}{\xi_{h}}\right)^2 
	\end{align*}
	where $\xi_h = \min\limits_{i<h,t \ge 1} \xi_{ih}(x_t)$, $\lambda_{\Sigma} = \min\limits_{i < j\le K}\lambda_{min}\left(\EE{\Phi_{ij}(X_s)\Phi_{ij}(X_s)^\top}\right)$ and $\alpha_T \ge \max\limits_{i<h}\alpha_{ih}^T$. By using union bound, we get following bound with probability at least $1-\delta/2K$
	\begin{align*}
		\sum_{t=1}^{T}\one{I_t = h, i^\star_t < h} \le n_{h}^T. \numberthis \label{equ:hIndBnd}
	\end{align*}
	
	From \cref{equ:hIndBnd}, using $\sum_{t=1}^T \one{I_t = h, i^\star_t<h} \le n_{h}^T$ and value of $n_h^T$, we get following upper bound on regret that holds with probability at least $1-\delta$ by union bound
	\begin{align*}
		\Regret_T \le R_{max}\sum_{h=2}^{K} n_{h}^T =  R_{max}\sum_{h=2}^{K} \left( \left(\frac{C_1\sqrt{d^\prime}+C_2\sqrt{\log (K^2/2\delta)}}{\lambda_{\Sigma}}\right)^2+ \frac{8}{\lambda_{\Sigma}}\left(\frac{k_\mu\alpha_T}{\xi_{h}}\right)^2\right).
	\end{align*}
	
	Using $\alpha_{T}= \frac{2\sigma}{\kappa}\sqrt{\frac{d^\prime}{2}\log\left(1 + 2T/{d^\prime} \right) + \log\left({K^2}/{2\delta}\right)}$ from Lemma \ref{lem:theta_est_glm} that ensures parameter $\Ts$ bounds for all pairs $(i,j)$ holds with probability at least $1 - \delta/2K$ (by union bound) for $T>m$ where $m=C\lambda_\Sigma^{-2}\left(d+\log(k^2/2\delta)\right) + 2\lambda_\Sigma^{-1}$ such that $\lambda_{min}(V_{ij}^{m+1}) \ge 1$ for all pair $(i,j)$, we have
	\begin{align*}
		\Regret_T &\le R_{max}\left(m + \sum_{h=2}^{K} n_{h}^T\right)\\
		\implies \Regret_T  &\le R_{max}\Bigg[m + \sum_{h=2}^{K} \Bigg(\hspace{-1mm} \Bigg(\frac{C_1\sqrt{d^\prime}+C_2\sqrt{\log \left(\frac{K^2}{2\delta}\right)}}{\lambda_{\Sigma}}\Bigg)^2  \\
		&\qquad + \frac{16}{\lambda_{\Sigma}}\left(\frac{k_\mu\sigma}{\xi_{h}\kappa}\right)^2  \left(\frac{d^\prime}{2}\log\left(1 + \frac{2T}{d^\prime} \right) + \log\left(\frac{K^2}{2\delta}\right)\right)\Bigg)\Bigg]. \qedhere
	\end{align*}
\end{proof}

 		\subsection{Algorithm with Regularization Term}
 		\label{asec:lambda_algorithm}

\ref{alg:CUSS_WD} uses forced exploration by selecting arm $K$ until the correlation matrix $V_{ij}^t$ is not invertible for all $(i,j)$ pairs of arms. Further, the minimum eigenvalue of $V_{ij}^t$ for all $(i,j)$ pairs is needed to be larger than $1$ so that bound given in \cref{lem:theta_est_glm} holds. Alternatively, $V_{ij}^t$ can be initialized by adding a regularization term \citep{NIPS11_abbasi2011improved,ICML16_zhang2016online,NIPS17_jun2017scalable} to avoid forced exploration and then apply OFUL type analysis. We have given an algorithm named \ref{alg:CUSS_GLM2} which uses regularization term $\lambda\mathrm{I}_{d^\prime}$. However, its analysis still needed the minimum eigenvalue of the non-regularized part of the correlation matrix to become larger than some positive value (depends on $\lambda$ value), as shown in our next result.
\begin{algorithm}[H]
    \floatname{algorithm}{}
    \renewcommand{\thealgorithm}{USS-PD-$\lambda\mathrm{I}$}
	\caption{Algorithm for Contextual USS using Pairwise Disagreement with $\lambda\mathrm{I}$ Initialization}
	\label{alg:CUSS_GLM2}
	\begin{algorithmic}[1]
		\State \textbf{Input:} Tuning parameters: $\delta \in (0,1)$ and $\lambda>0$ 
		\State Select arm $K$ for first context $x_1$
		\State $\forall i< j \le K:$ set $\overline{V}_{ij}^{1} \leftarrow \lambda\mathrm{I}_{d^\prime} + \Phi_{ij}(x_{1}) {\Phi_{ij}(x_{1})}^\top$ and update $\hat\theta_{ij}^1$ by solving \cref{equ:glm_est} 
		\For{$t= 2, 3, \ldots$}
			\State Receive context $x_t$. Set $i=1$ and $I_t=0$
			\Do
				\State Play arm $i$ 
				\State $\forall j \in [i+1, K]:$ compute $\tpijt \leftarrow \mu\left(\Phi_{ij}(\xt)^\top \hat\theta_{ij}^{t-1} + \alpha_{ij}^{t-1}\norm{ \Phi_{ij}(\xt)}_{\left(\overline{V}_{ij}^{t-1}\right)^{-1}} \right)$
				\State If $\forall j \in [i+1, K]: C_j - C_i > \tpijt$ or $i=K$ then set $I_t = i$ else  set $i=i+1$
			\doWhile{$I_t=0$}
			\State Select arm $I_t$ and observe $Y_t^1, Y_t^2, \dots, Y_t^{I_t}$
			\State $\forall i< j \le I_t:$  update $\oVijt \leftarrow \overline{V}_{ij}^{t-1} + \Phi_{ij}(x_{t}) {\Phi_{ij}(x_{t})}^\top$ and $\Te$ by solving \cref{equ:glm_est} 
		\EndFor
	\end{algorithmic}
\end{algorithm}

\begin{restatable}{lem}{thetaEstGLM2}
	\label{lem:theta_est_glm2} 
	Let $\oVt = \lambda\mathrm{I}_{d^\prime} + \Vt$ for any $\lambda>0$ and $\norm{\theta_{ij}}_2 \le S$ for all $(i,j)$ pair. Then for any $t > \min\{s: \forall i<j \ni \lambda_{\min}(V_{ij}(s)) \ge 2\lambda\}$, the following event holds for \ref{alg:CUSS_GLM2} with probability at least $1-2\delta/K^2$,
	\begin{align*}
		&\norm{ \Te - \Ts }_{\oVt} \le \beta_{ij}^t,
	\end{align*}
	where $\beta_{ij}^t = \frac{2\sigma}{\kappa}\sqrt{\frac{d^\prime}{2}\log\left(1 + \frac{n_{ij}^t}{d^\prime\lambda} \right) + \log\left(\frac{K^2}{2\delta}\right)} + 2\lambda^{1/2}S $.
\end{restatable}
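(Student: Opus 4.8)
The plan is to follow the template of \cref{lem:theta_est_glm} but to carry the regularizer $\lambda\mathrm{I}_{d^\prime}$ through the whole argument and to absorb the extra bias it introduces into the additive term $2\lambda^{1/2}S$. The natural starting point is the elementary inequality $\sqrt{a^2+b^2}\le a+b$ for $a,b\ge 0$, applied to the exact decomposition $\norm{\Te-\Ts}_{\oVt}^2 = \norm{\Te-\Ts}_{\Vt}^2 + \lambda\norm{\Te-\Ts}_2^2$, which yields
\[
\norm{\Te-\Ts}_{\oVt} \le \norm{\Te-\Ts}_{\Vt} + \lambda^{1/2}\norm{\Te-\Ts}_2 .
\]
The second summand is handled purely by the boundedness assumption: since $\norm{\theta_{ij}}_2\le S$ for every pair, $\norm{\Te-\Ts}_2\le \norm{\Te}_2+\norm{\Ts}_2\le 2S$, so it contributes exactly the $2\lambda^{1/2}S$ appearing in $\beta_{ij}^t$ (the factor $2$, rather than the $1$ in \cite{NIPS11_abbasi2011improved}, is precisely because the unregularized MLE bias is measured against both $\Te$ and $\Ts$).

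For the first summand I would reuse the GLM linearization as in \cref{lem:theta_est_glm}: substituting $d_{ij}(s)=\mu(\Phi_{ij}(x_s)^\top\Ts)+\epsilon_{ij}^{(s)}$ into the score equation \cref{equ:glm_est} and applying the mean-value form together with $\dot\mu\ge\kappa$ and Cauchy--Schwarz (Eq. (26) and Lemma 8 of \cite{ICML17_li2017provably}) gives $\norm{\Te-\Ts}_{\Vt}\le \kappa^{-1}\norm{\sum_{s\in S_{ij}^t}\epsilon_{ij}^{(s)}\Phi_{ij}(x_s)}_{\Vtinv}$. To feed this into the self-normalized martingale bound, which is anchored to the regularized norm $\oVtinv$, I would transfer norms using the hypothesis on $t$: the condition $\lambda_{\min}(\Vt)\ge 2\lambda$ forces $\lambda\mathrm{I}_{d^\prime}\preceq \tfrac12\Vt$, hence $\oVt\preceq\tfrac32\Vt$ and $\Vtinv\preceq\tfrac32\oVtinv$, so $\norm{\cdot}_{\Vtinv}\le\sqrt{3/2}\,\norm{\cdot}_{\oVtinv}$. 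Applying Theorem 1 of \cite{NIPS11_abbasi2011improved} to the $\sigma$-sub-Gaussian sequence $\epsilon_{ij}^{(s)}$ then bounds $\norm{\sum_s\epsilon_{ij}^{(s)}\Phi_{ij}(x_s)}_{\oVtinv}$ by $\sigma\sqrt{2}\,\bigl(\tfrac12\log\tfrac{\det\oVt}{\det(\lambda\mathrm{I}_{d^\prime})}+\log\tfrac{K^2}{2\delta}\bigr)^{1/2}$ with probability at least $1-2\delta/K^2$.

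To finish, I would bound the log-determinant ratio with \cref{lem:detVt} (taking $a=1$): $\det(\oVt)\le(\lambda+n_{ij}^t/d^\prime)^{d^\prime}$ and $\det(\lambda\mathrm{I}_{d^\prime})=\lambda^{d^\prime}$, giving $\det(\oVt)/\det(\lambda\mathrm{I}_{d^\prime})\le(1+n_{ij}^t/(d^\prime\lambda))^{d^\prime}$. Collecting the constants, $\kappa^{-1}\sqrt{3/2}\cdot\sigma\sqrt2=\sqrt3\,\sigma/\kappa\le 2\sigma/\kappa$, reproduces the leading square-root term of $\beta_{ij}^t$, and adding the $2\lambda^{1/2}S$ bias term completes the bound.

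The main obstacle is the norm-transfer step: the GLM linearization delivers an error bound in the \emph{unregularized} geometry $\Vtinv$, whereas the self-normalized concentration inequality is naturally stated in the \emph{regularized} matrix $\oVtinv$, and the two are only comparable once $\Vt$ has grown enough relative to $\lambda\mathrm{I}_{d^\prime}$. This is exactly why the statement is restricted to $t>\min\{s:\forall i<j,\ \lambda_{\min}(V_{ij}(s))\ge 2\lambda\}$; establishing this threshold and the resulting spectral domination $\oVt\preceq\tfrac32\Vt$ is the technical heart of the argument, after which the determinant estimate and the constant bookkeeping are routine.
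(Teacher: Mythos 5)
Your proposal matches the paper's proof essentially step for step: the same split of $\norm{\Te-\Ts}_{\oVt}$ into an unregularized term plus the $2\lambda^{1/2}S$ bias handled via $\norm{\theta_{ij}}_2\le S$, the same GLM linearization from Lemma 8 of \cite{ICML17_li2017provably}, the same self-normalized bound of \cite{NIPS11_abbasi2011improved} with confidence $1-2\delta/K^2$, and \cref{lem:detVt} for the determinant ratio, all activated by the threshold $\lambda_{\min}(\Vt)\ge 2\lambda$. The only (harmless) deviation is the norm-transfer step, where your spectral comparison $\oVt \preceq \tfrac{3}{2}\Vt$ yields the factor $\sqrt{3/2}$ in place of the paper's Sherman--Morrison factor $\bigl(1-\lambda/\lambda_{\min}(\Vt)\bigr)^{-1/2}\le\sqrt{2}$; yours is in fact slightly tighter and is likewise absorbed into the final constant $2\sigma/\kappa$.
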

\begin{proof}
	By using $\norm{Z}_{A+B} \le \norm{Z}_{A} + \norm{Z}_{B}$ , we have
	\begin{align*}
		\norm{\Te - \Ts}_{\oVt} &\le \norm{\Te - \Ts}_{\Vt} + \norm{\Te - \Ts}_{\lambda\mathrm{I}_{d^\prime}}. & \hspace{-2cm}(\mbox{as } \oVt = \lambda\mathrm{I}_{d^\prime} + \Vt) \\
		\intertext{When \cref{equ:glm_est} is used for estimation of unknown parameter $\Ts$ then by using Eq. (26) and Eq. (27) of Lemma 8 of \cite{ICML17_li2017provably}, we have}
		& \le \frac{1}{\kappa}\norm{\sum_{s\in S_{ij}^t} \epsilon_s \Phi_{ij}(x_s)}_{\Vtinv} + 2\lambda^{1/2}S. &\hspace{-5cm} (\mbox{as } \norm{\theta_{ij}}_2 \le S) 
	\end{align*}
	
	Sherman Morrison formula gives $\norm{Z}_{\Vtinv} \le \left(1-\frac{\lambda}{\lambda_{\min}(\Vt)}\right)^{-\frac{1}{2}}\norm{Z}_{\oVtinv}$. Using it, we have
	\begin{align*}
		\norm{\Te - \Ts}_{\oVt} \le \frac{\left(1-\frac{\lambda} {\lambda_{\min}(\Vt)}\right)^{-\frac{1}{2}}}{\kappa}\norm{\sum_{s\in S_{ij}^t} \epsilon_s \Phi_{ij}(x_s)}_{\oVtinv} + 2\lambda^{1/2}S.
	\end{align*}
	
	Using upper bound on $\norm{\sum_{s\in S_{ij}^t}\epsilon_s\Phi_{ij}(x_s)}_{(\overline\Vt)^{-1}}$ as given in Theorem 1 of \cite{NIPS11_abbasi2011improved}, where $\epsilon_s$ is $\sigma-$subGaussian random variable and holds with probability at least $1-2\delta/K^2$, we get
	\begin{align*}	
		\norm{\Te - \Ts}_{\oVt} &\le \frac{\left(1-\frac{\lambda}{\lambda_{\min}(\Vt)}\right)^{-\frac{1}{2}}}{\kappa}\sqrt{2\sigma^2\log\left(\frac{det(\oVt)^{1/2}det(\lambda\mathrm{I}_{d^\prime})^{-1/2}}{2\delta/K^2}\right)}  + 2\lambda^{1/2}S \\
		&=\frac{\sigma\left(1-\frac{\lambda}{\lambda_{\min}(\Vt)}\right)^{-\frac{1}{2}}}{\kappa}\sqrt{2\log\left(\frac{det(\oVt)}{det(\lambda\mathrm{I}_{d^\prime})}\right)^{\frac{1}{2}} + 2\log\left(\frac{K^2}{2\delta}\right)}  + 2\lambda^{1/2}S. 
	\end{align*}
	
	By using \cref{lem:detVt} to upper bound $det(\oVt)$, where $t>s$ with $a = 1$, and $n_{ij}^t \le t$, we get
	\begin{align}
		\label{equ:minEigenValLambda}
		\norm{\Te - \Ts}_{\oVt} \le \frac{\sigma\left(1-\frac{\lambda}{\lambda_{\min}(\Vt)}\right)^{-\frac{1}{2}}}{\kappa} \sqrt{{d^\prime}\log\left(1 + \frac{n_{ij}^t}{d^\prime\lambda} \right) + 2\log\left(\frac{K^2}{2\delta}\right)} + 2\lambda^{1/2}S.
	\end{align}
	
	As $t>s$ such that $\lambda_{\min}(V_{ij}(s)) \ge 2\lambda$, we have
	\[\norm{\Te - \Ts}_{\oVt} \le \frac{2\sigma}{\kappa} \sqrt{\frac{d^\prime}{2}\log\left(1 + \frac{n_{ij}^t}{d^\prime\lambda} \right) + \log\left(\frac{K^2}{2\delta}\right)} + 2\lambda^{1/2}S  = \beta_{ij}^t. \qedhere \]
\end{proof}

Note that if $\lambda_{\min}(V_{ij}(s)) < \lambda$ then $\left(1-\frac{\lambda}{\lambda_{\min}(\Vt)}\right)^{-\frac{1}{2}}$ is not well defined and the bound given in \cref{lem:theta_est_glm2} does not hold. Therefore, $\lambda_{\min}(V_{ij}(s))$ need to be at least greater than $\lambda$. Let $m^\prime \doteq C\lambda_\Sigma^{-2}\left(d+\log(k^2/2\delta)\right) + 4\lambda_\Sigma^{-1}\lambda$ where $C>0$ is the universal constant. Recall $R_{max} \doteq \max_{i\in [K], x\in \mathcal{X}}$ $ \left[C_i +  \gamma_{i}(x) - \left( C_{i^\star} + \gamma_{i^\star}(x)\right)\right]$, where $i^\star$ is the optimal arm for a context $x$. Now we state the regret bounds for \ref{alg:CUSS_GLM2}.
\begin{restatable}{thm}{cumRegGLM2}
	\label{thm:cum_reg_glm2}
	Let $\theta \in \TCWD$, $\lambda>0$, $\delta \in (0,1)$, Assumption 1 holds, and $\xi_h = \min\limits_{t \ge 1} \xi_{i^\star_t h}(x_t)$. Then with probability at least $1-2\delta$, the regret of \ref{alg:CUSS_GLM2} for $T > m^\prime$ contexts is upper bounded as
	\begin{align*}
		\Regret_T &\le R_{max}\Bigg(m^\prime + \sum_{h=2}^{K} \Bigg(\hspace{-1mm} \Bigg(\frac{C_1\sqrt{d^\prime} + C_2\sqrt{\log \left(\frac{K^2}{2\delta}\right)}}{\lambda_{\Sigma}}\Bigg)^2  \hspace{-2mm} + \frac{32\lambda}{\lambda_{\Sigma}}\left(\frac{k_\mu\sigma}{\xi_{h}\kappa}\right)^2 \\
		&\qquad 
		\Bigg(\sqrt{\frac{d^\prime}{2}\log\left(1 + \frac{T}{d^\prime\lambda} \right) + \log\left(\frac{K^2}{2\delta}\right)} + 2\lambda^{1/2}S\Bigg)^2\Bigg)\Bigg).
	\end{align*} 
\end{restatable}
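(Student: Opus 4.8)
The plan is to run the regret decomposition from the proof of \cref{thm:cum_reg_glm} essentially verbatim, replacing the forced-exploration confidence radius $\alpha_{ij}^t$ by the regularized radius $\beta_{ij}^t$ of \cref{lem:theta_est_glm2}, the sample correlation matrices $\Vt$ by their regularized counterparts $\oVt = \lambda\mathrm{I}_{d^\prime} + \Vt$, and the forced-exploration budget $m$ by a warm-up budget $m^\prime$. First I would reproduce \cref{equ:RegretInd}, bounding $\Regret_T \le R_{max}\sum_{t=1}^T \one{I_t \ne i^\star_t}$, and then split the mistake indicator exactly as in \cref{equ:IndWrongChoice} into ``left'' selections $\one{l \succ_t i^\star_t, i^\star_t = i}$ with $l<i$ and ``right'' selections $\one{h \succ_t i^\star_t, i^\star_t < h}$ with $h>i$. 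This step is purely combinatorial and is insensitive to which confidence radius \ref{alg:CUSS_GLM2} employs.

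Next I would port the two selection lemmas to \ref{alg:CUSS_GLM2}. For the left case, the chain in \cref{lem:subOptimalLowerSelection} goes through unchanged once \cref{lem:theta_est_glm2} supplies the guarantee $\norm{\Te-\Ts}_{\oVt}\le\beta_{ij}^t$ in the $\oVt$-norm: the optimistic index built from $\Tel$ and the bonus $\beta_{li}^t\norm{\Phi_{li}(x_t)}_{(\overline{V}_{li}^t)^{-1}}$ still upper-bounds $\mu(\Phi_{li}(x_t)^\top\Tsl)$, so a lower-indexed sub-optimal arm fails the stopping test with probability at least $1-\delta/2$. For the right case, repeating the Lipschitz and Cauchy--Schwarz steps of \cref{lem:subOptimalUpperSelection} in the $\oVt$-norm shows that a higher-indexed arm $h$ is preferred only if $2k_\mu\beta_{i^\star_t h}^t > \xi_{i^\star_t h}(x_t)\sqrt{\lambda_{min}(\overline{V}_{i^\star_t h}^t)}$; since $\oVt=\lambda\mathrm{I}_{d^\prime}+\Vt\succeq\Vt$ I may replace $\lambda_{min}(\overline{V}_{i^\star_t h}^t)$ by the smaller $\lambda_{min}(\Vths)$, recovering precisely the threshold fed into the eigenvalue count.

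I would then invoke \cref{lem:min_eigen_lb} with $\Psi = (2k_\mu\beta_{ih}^T/\xi_{ih})^2$ (using $\beta_{ih}^t\le\beta_{ih}^T$, as $\beta_{ih}^t$ is slowly increasing in $t$) to conclude that, after $n_{ih}^T$ disagreement observations for the pair $(i,h)$, we have $\lambda_{min}(\Vths)\ge\Psi$ and hence $h$ is no longer preferred over $i$. The cascade side-observation argument --- every pull of an arm of index $\ge h$ yields a fresh observation for pair $(i,h)$ --- then caps the total right-selections of $h$ by $n_h^T$, exactly as in the display preceding \cref{equ:hIndBnd}. The genuinely new ingredient is the warm-up: the Sherman--Morrison inequality inside \cref{lem:theta_est_glm2}, and hence $\beta_{ij}^t$ itself, is valid only once $\lambda_{min}(V_{ij})\ge 2\lambda$ for \emph{every} pair. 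I would quantify the number of rounds needed for this by a second application of \cref{lem:min_eigen_lb}, now with $\Psi = 2\lambda$, giving $m^\prime \doteq C\lambda_\Sigma^{-2}(d+\log(k^2/2\delta)) + 4\lambda_\Sigma^{-1}\lambda$, and charge the trivial per-round bound $R_{max}$ to these first $m^\prime$ rounds.

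The step I expect to be the main obstacle is making the warm-up rigorous. Unlike \ref{alg:CUSS_WD}, which forces arm $K$ for $m$ rounds and thereby feeds all $\binom{K}{2}$ pairs at equal rate, \ref{alg:CUSS_GLM2} pulls arm $K$ only once and afterwards explores up the cascade, so high-index pairs may accumulate observations more slowly than elapsed rounds; I would close this gap with the same cascade bookkeeping used for the right-selection count, together with a union bound over pairs that absorbs the per-pair failure probabilities $2\delta/K^2$ of \cref{lem:min_eigen_lb,lem:theta_est_glm2} into the final $1-2\delta$. With the warm-up in hand, substituting $\beta_T = \frac{2\sigma}{\kappa}\sqrt{\frac{d^\prime}{2}\log\!\big(1+\tfrac{T}{d^\prime\lambda}\big) + \log\!\big(\tfrac{K^2}{2\delta}\big)} + 2\lambda^{1/2}S$ into $n_h^T = \big(\tfrac{C_1\sqrt{d^\prime}+C_2\sqrt{\log(K^2/2\delta)}}{\lambda_\Sigma}\big)^2 + \tfrac{2}{\lambda_\Sigma}(2k_\mu\beta_T/\xi_h)^2$ and summing over $h=2,\dots,K$ produces the stated $\frac{32\lambda}{\lambda_\Sigma}\big(\tfrac{k_\mu\sigma}{\xi_h\kappa}\big)^2\big(\sqrt{\cdots}+2\lambda^{1/2}S\big)^2$ term, the squared bracket inheriting the additive structure of $\beta_T$, which together with $R_{max}m^\prime$ yields the claimed bound.
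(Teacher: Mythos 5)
Your proposal is correct and takes essentially the same route as the paper: the paper's own proof of \cref{thm:cum_reg_glm2} is precisely the reduction you describe, namely rerunning the argument of \cref{thm:cum_reg_glm} with $m$ replaced by $m^\prime$ and $\alpha_{ij}^T$ replaced by $\beta_{ij}^T$ from \cref{lem:theta_est_glm2}. If anything, you are more explicit than the paper on the warm-up step (invoking \cref{lem:min_eigen_lb} with $\Psi = 2\lambda$ to justify $m^\prime$, since \ref{alg:CUSS_GLM2} does not force exploration), which the paper leaves implicit in its definition of $m^\prime$.
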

\begin{proof}
	The proof follows similar steps as \cref{thm:cum_reg_glm} by replacing $m$ by $m^\prime$ and $\alpha_{ij}^T$ by $\beta_{ij}^T$. Using $\beta_{ij}^T = \sqrt{\frac{d^\prime}{2}\log\left(1 + \frac{T}{d^\prime\lambda} \right) + \log\left(\frac{K^2}{2\delta}\right)} + 2\lambda^{1/2}S$ completes the proof. 
\end{proof}

 		\section{Leftover details from \cref{sec:experiments}}
 		\label{sec:supp_experiments}

Since the parameter of each arm (classifier) is known to us (but not to the algorithm), the optimal arm $i^\star_t$ can be computed for every context. Therefore, we can also calculate the fraction of contexts for which $\WD$ property holds to a given cost vector. To verify $\WD$ property for a given context $x_t$, we first compute disagreement probability for each $(i,j)$ pair of classifiers as\footnote{For computing disagreement probability, we assume that the feedback of any arm is independent of the feedback of other arms. Note that \ref{alg:CUSS_WD} does not need such an assumption.}
\[
\pijt = \mu(x_t^\top\theta_i)(1-\mu(x_t^\top\theta_j)) + \mu(x_t^\top\theta_j)(1-\mu(x_t^\top\theta_i)).
\]

When all $\pijt$ values and $i^\star_t$ are known, we can check whether a context $x_t$ satisfies $\WD$ property or not by using \cref{equ:WDProp}. For all problem instances derived from the synthetic dataset, the cost vector and the fraction of contexts for which $\WD$ property holds are given in Table \ref{table:sysDatasetCases}. 

\begin{table}[H]
	\centering
	\setlength\tabcolsep{5pt}
	\setlength\extrarowheight{4pt}
	\begin{tabularx}{0.828\textwidth}{|p{2cm}|p{1cm}|p{1cm}|p{1cm}|p{1cm}|p{1cm}|p{2cm}|}
		\hline
		\textbf{PI/Classifiers} & Clf. 1        & Clf. 2        & Clf. 3  & Clf. 4 & Clf. 5     & $\WD$ fraction     \\ 
		\hline
		Costs for PI $1$ & 0.01 & 0.02  & 0.032  & 0.05 & 0.55 & 0.997\\ 
		\hline
		Costs for PI $2$ & 0.01 & 0.02  & 0.032  & 0.05 & 0.6 & 1.0\\ 
		\hline
		Costs for PI $3$ & 0.01 & 0.02  & 0.032  & 0.05 & 0.65 & 1.0\\
		\hline
		Costs for PI $4$ & 0.01 & 0.02  & 0.032  & 0.05 & 0.7 & 1.0\\
		\hline
	\end{tabularx}
	\vspace{2mm}
	\caption{Details of different problem instances (PIs) derived from synthetic datasets.}
	\label{table:sysDatasetCases}
\end{table}

\textbf{Heart Disease dataset:} Each sample of the Heart Disease dataset has $12$ features. We split the features into three subsets and train a logistic classifier on each subset. We associate 1st classifier with the first $7$ features as input that include cholesterol readings, blood-sugar, and rest-ECG. The 2nd classifier, in addition to the $7$ features, utilizes the thalach, exang, and oldpeak features; and the 3rd classifier uses all the features. For performance evaluation, the different values of costs are used in three problem instances for both real datasets, as given in Table \ref{table:real_dataset}. The PIMA diabetes dataset has $768$ samples, whereas the Heart Disease dataset has only $297$ samples. As $5000$ contexts are used in our experiments, we select a sample in a round-robin fashion and give it as input to the algorithm.
\begin{table}[H]
	\centering
	\setlength\tabcolsep{5pt}
	\setlength\extrarowheight{4pt}
	\begin{tabularx}{0.985\textwidth}{|p{2cm}|p{0.81cm}|p{0.81cm}|p{0.81cm}|p{1.8cm}|p{0.81cm}|p{0.81cm}|p{0.81cm}|p{1.8cm}|}
		\hline
		\multirow{2}{*}{\parbox{2cm}{\bf Values/ \newline Classifiers}} &\multicolumn{4}{|c|}{\bf PIMA Indian Diabetes Dataset}&\multicolumn{4}{|c|}{\bf Heart Disease Dataset} \\ 
		\cline{2-9} 
		&Clf. 1 &Clf. 2&Clf. 3&WD Fraction &Clf. 1 &Clf. 2&Clf. 3&WD Fraction\\ 
		\hline
		Costs for PI $1$& 0.01 & 0.25 & 0.5 & 0.0692 &0.01 & 0.25 & 0.5 &0.1384\\ 
		\hline
		Costs for PI $2$& 0.01 & 0.3 & 0.5 & 0.1192 &0.01 & 0.3 & 0.5 &0.1454\\ 
		\hline
		Costs for PI $3$& 0.01 & 0.35 & 0.5 & 0.2204&0.01 & 0.35 & 0.5 &0.2426\\ 
		\hline
	\end{tabularx}
	\vspace{2mm}
	\caption{Details of different problem instances (PIs) derived from real datasets.}
	\label{table:real_dataset}
\end{table}

\textbf{Experiments Results:} Through our experiments, we show that the stronger the $\CWD$ property (large value of $\xi$) for the problem instance, it is easier to identify the optimal arm and, hence, has lower regret, as shown in \cref{fig:xi_regret}. We also compare the performance of \ref{alg:CUSS_WD} with three baseline policies on problem instances derived from the Heart Disease dataset (same as the PIMA Indian Diabetes dataset). As expected, we observe that \ref{alg:CUSS_WD} outperforms the baseline policies, as shown in \cref{fig:heart}. Note that we used $\delta=0.05$ and $\sigma=0.1$ in all experiments.
\begin{figure}[H]
	\captionsetup[subfigure]{justification=centering}
	\begin{subfigure}[b]{0.495\linewidth}
		\includegraphics[width=\linewidth]{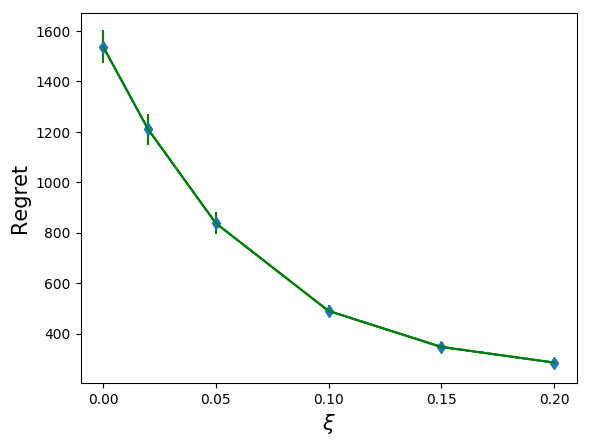}
		\caption{Regret v/s $\CWD$ property $(\xi)$.}
		\label{fig:xi_regret}
	\end{subfigure}
	\quad
	\begin{subfigure}[b]{0.495\linewidth}
		\includegraphics[width=\linewidth]{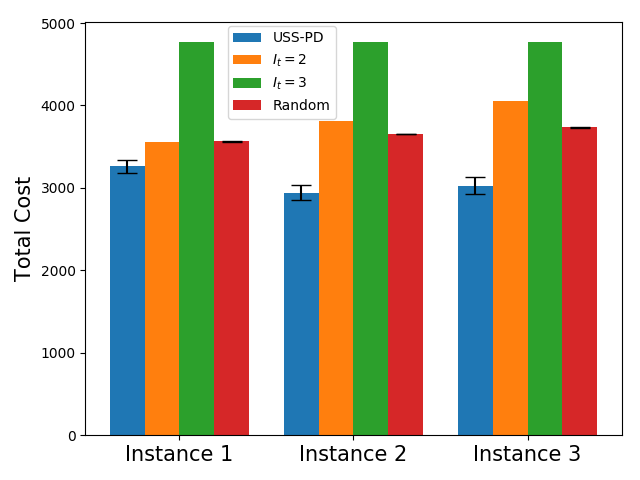}
		\caption{Total cost for PIMA Indian Diabetes dataset.}
		\label{fig:heart}
	\end{subfigure}
	\caption{\small Performance of \ref{alg:CUSS_WD}.}
	\label{fig:moreExp}
\end{figure}

\subsection{Realizable Setting}

We consider the realizable case where all contexts satisfy \cref{equ:disProb} (by fixing $\theta_{ij}$ for each $(i,j)$ pair of arms) and $\WD$ property. Since $\WD$ holds, we can use \cref{lem:Bx} for finding the optimal arm. Note that the mean loss cannot be computed for this setting as we set parameters of disagreement probabilities instead of setting parameters for individual arms.  We use an upper bound on the regret to evaluate the performance of \ref{alg:CUSS_WD} on the Synthetic dataset, as shown in \cref{fig:realizable}. We repeat experiments $500$ times to get a tighter confidence interval. 
\begin{figure}[H]
	\captionsetup[subfigure]{justification=centering}
	\begin{subfigure}[b]{0.495\linewidth}
		\includegraphics[width=\linewidth]{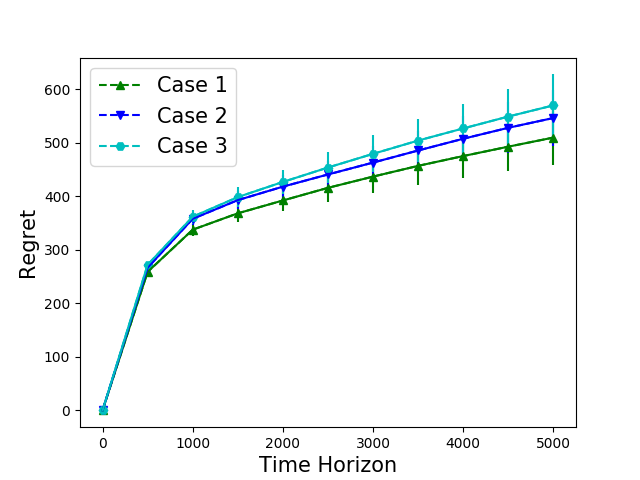}
		\caption{Synthetic dataset with $4$ classifiers where cost of using classifier $i$ in problem instance $j$ is $0.1 + (i-1)(0.09 + (j-1)0.01)$.}
		\label{fig:realizable4Cls}
	\end{subfigure}
	\quad
	\begin{subfigure}[b]{0.495\linewidth}
		\includegraphics[width=\linewidth]{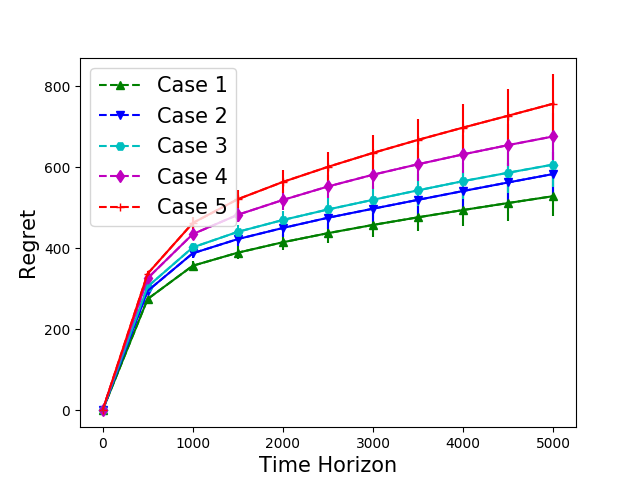}
		\caption{Synthetic dataset with $5$ classifiers where cost of using classifier $i$ for problem instance $j$ is $0.1 + (i-1)(0.06 + (j-1)0.01)$.}
		\label{fig:realizable5Cls}
	\end{subfigure}
	\caption{Performance of \ref{alg:CUSS_WD} for realizable setting where regret on y-axis is $\sum_{t=1}^T |C_{I_t} - C_{i^\star_t}| + \pist$, and it is an upper bound on the regret $\Regret_T$ defined in \cref{equ:cum_regret}. The value of $\xi$ largest for Case $1$, and it decreases for subsequent cases.}
	\label{fig:realizable}
\end{figure}

\paragraph{Regret used for Empirical Evaluation in Realizable Setting}~\\
Since the error-rate of arms is unknown, the regret defined in \cref{equ:cum_regret} can not be computed. Hence we define an alternative regret, which we call pseudo regret, as follows:
\begin{equation*}
	\Regret_T^s = \sum_{t=1}^T \left[C_{I_t} - C_{i^\star_t} + \pist \right].
\end{equation*}

It is easy to verify that the actual regret $\Regret_T$ is upper bounded by above regret $\Regret_T^s$ as shown follows:
\begin{align*}
	\Regret_T &= \sum_{t=1}^T \left[C_{I_t} + \gamma_{I_t}(x_t) - \left(C_{i^\star_t} + \gamma_{i^\star_t}(x_t)\right) \right] \\
	&= \sum_{t=1}^T \left[C_{I_t} - C_{i^\star_t} + \left(\gamma_{I_t}(x_t) -\gamma_{i^\star_t}(x_t)\right) \right]\\
	&\le \sum_{t=1}^T \left[C_{I_t} - C_{i^\star_t} + \pist \right] \hspace{5mm} \text{(Using \cref{lem:err_prob_contx})}\\
	\implies \Regret_T &\le \Regret_T^s.
\end{align*}

\subsection{Contextual Strong  Dominance }
We next introduce contextual strong dominance property of the problem instance. 
\begin{defi}[Contextual Strong  Dominance $(\CSD)$ property] 
	\label{def:CSD} 
	A problem instance is said to satisfy $\CSD$ property if for all contexts following is true:
	\eqs{
		Y^i = Y \mbox{ for some } i \in[K] \implies  Y^j = Y, ~~\forall j \in [K]\setminus [i].
	}
	We represent the set of all instances satisfies $\CSD$ property by $\TCSD$.
\end{defi}
The $\CSD$ property implies that if the feedback of an arm is the same as the true reward of a given context then, the feedback of all the arms in the subsequent stages of the cascade is also the same as the true reward of a given context.

When any problem instance satisfies $\CSD$ property, the value of $\Prob{\Yti = \Yt, \Yti \ne \Ytj|X=x_t} = 0$ for $j>i$. Therefore, for any $(i,j)$ pair of arms and context $\xt$ the following is true:
\eqs{
	\forall j>i, \gamma_{i}(\xt) - \gamma_{j}(\xt) = \Prob{\Yti \ne \Ytj| X= \xt}.
} 
The above equation implies that $\CWD$ property holds trivially for the problem instances that satisfy $\CSD$ property as the difference of mean losses is the same as the probability of disagreement between two arms(fix arm $i = \ist$ for given context $\xt$).

\subsection{Effect of adding more arms on $\WD$ property}  
The performance of \ref{alg:CUSS_WD} can deteriorate as we increases as the number of arms. This is because the fraction of contexts that satisfy $\WD$ property can decrease with the increase in the number of arms. To see that, consider a contextual USS problem instance with three arms where arm $1$ has cost $0.1$, arm $2$ has cost $0.2$, and arm $3$ has cost $0.3$. Let there be two contexts $x_1$ and $x_2$ such that classifier $2$ is an optimal classifier for context $x_1$ and classifier $3$ for the context $x_2$, and both contexts satisfy $\WD$ property. When a new arm is added at the end of the classifiers cascade without changing the optimal arm for the contexts, let $p_{24}^{(1)}$ be the disagreement probability for classifier $2$ and $4$ for context $x_1$ and $p_{34}^{(2)}$ be the disagreement probability for classifier $3$ and $4$ for context $x_2$. It is easy to verify that if cost of using classifier $4$ is less than $\min\{0.2+p_{24}^{(1)}, 0.3+p_{34}^{(2)}\}$ then both contexts will not satisfy $\WD$ property.  	
 		
 		\vspace{4mm}
 		\hrule
	\fi

\end{document}